\theoremstyle{plain}
\newtheorem{theorem}{Theorem}[section]
\newtheorem{proposition}[theorem]{Proposition}
\newtheorem{lemma}[theorem]{Lemma}
\newtheorem{corollary}[theorem]{Corollary}
\theoremstyle{definition}
\newtheorem{definition}[theorem]{Definition}
\newtheorem{assumption}[theorem]{Assumption}
\theoremstyle{remark}
\def\minwrt[#1]{\underset{#1}{\text{minimize }}}
\def\argmin[#1]{\underset{#1}{\text{arg min }}}
\def\argmax[#1]{\underset{#1}{\text{arg max }}}
\begin{document}

\twocolumn[

\aistatstitle{Bures-Wasserstein Means of Graphs}

\aistatsauthor{ Isabel Haasler \And Pascal Frossard }

\aistatsaddress{ EPFL \And  EPFL } ]

\begin{abstract}
Finding the mean of sampled data is a fundamental task in machine learning and statistics.
However, in cases where the data samples are graph objects, defining a mean is an inherently difficult task.
We propose a novel framework for defining a graph mean via embeddings in the space of smooth graph signal distributions, where graph similarity can be measured using the Wasserstein metric.
By finding a mean in this embedding space, we can recover a mean graph that preserves structural information. 
We establish the existence and uniqueness of the novel graph mean, and provide an iterative algorithm for computing it. 
To highlight the potential of our framework as a valuable tool for practical applications in machine learning, it is evaluated on various tasks, including k-means clustering of structured aligned graphs, classification of functional brain networks, and semi-supervised node classification in multi-layer graphs.
Our experimental results demonstrate that our approach achieves consistent performance, outperforms existing baseline approaches, and improves the performance of state-of-the-art methods.
\end{abstract}

\section{INTRODUCTION}

Graphs, or networks, provide a compact representation for large data, describing for example biological, financial or social phenomena through interactions of elements in complex systems \citep{dong2019learning}.
Here, nodes represent entities, such as genes in a gene co-expression network \citep{sandhu2015graph}, areas in the brain \citep{bullmore2009complex}, or users in a social network \citep{majeed2020graph}, and edges describe correlations or interactions between them.
Due to their ability to model highly complex data in a wide range of applications, graphs have naturally become an increasingly important object of machine learning \citep{dong2020graph, hu2020open}. 

A key problem when considering graphs as statistical data objects is to define a mean for a set of graphs.
Various notions of graph means have been proposed, and they can be cast as versions of the following general optimization problem:
Given a set of graphs $G_1,\dots,G_m$ and a set of weights $\lambda_1,\dots,\lambda_m>0$ with $\sum_{j=0}^m \lambda_j = 1$, the weighted mean of the graphs is defined as
\begin{equation} \label{eq:graph_avg}
\bar{G} = \argmin[G \in \mathcal{G}] \sum_{j=1}^m \lambda_j d(G,G_j)^2, 
\end{equation}
where $\mathcal{G}$ is a suitable space, or set, of graphs, and $d(\cdot,\cdot)$ is some dissimilarity measure for graphs, see, e.g., \citet{el2020orthonet, kang2020multi, kolaczyk2020averages, lunagomez2021modeling, mercado2019spectral,  meyer2022frechet, peyre2016gromov, xu2019gromov}. 
For instance, in case $(\mathcal{G},d)$ is a metric space, \eqref{eq:graph_avg} is the corresponding sample Fr\'echet mean \citep{dubey2019frechet}.
Thus, finding a meaningful graph mean relies on defining an appropriate distance 
$d(\cdot,\cdot)$ for graphs, which however, is a challenging
task in graph analytics and machine learning \citep{tantardini2019comparing, wills2020metrics}.

\begin{figure*}[]
\centering
\begin{minipage}{.35\textwidth}
\begin{framed}
\begin{center}
\textbf{Data}
\end{center}
\includegraphics[trim={0 0 0 0}, clip, width=\columnwidth]{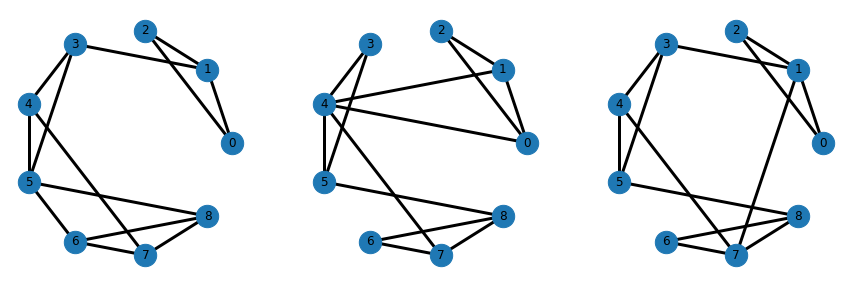}
\hrule
\begin{center}
\textbf{Means}
\end{center}
\begin{subfigure}[t]{.49\textwidth}
  \centering
  \includegraphics[trim={0 10 0 0}, clip,width=\textwidth]{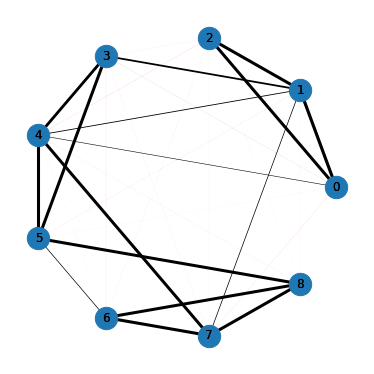}
  \caption*{Ours.}
  \end{subfigure} 
  \begin{subfigure}[t]{.49\textwidth}
  \centering
\includegraphics[trim={0 10 0 0}, clip,width=\textwidth]{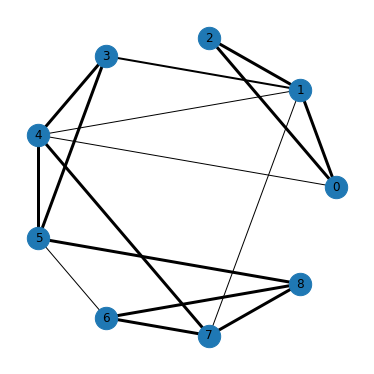}
  \caption*{Arithmetic.}
  \end{subfigure}
\begin{subfigure}[t]{.49\textwidth}
  \centering
\includegraphics[trim={0 10 0 0}, clip,width=\textwidth]{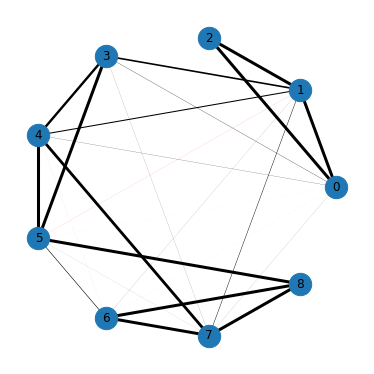}
  \caption*{Harmonic.}
\end{subfigure} 
   \begin{subfigure}[t]{.49\textwidth}
  \centering
\includegraphics[trim={0 10 0 0}, clip,width=\textwidth]{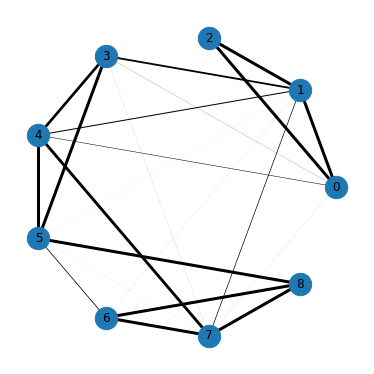}
  \caption*{Geometric.}
\end{subfigure}
\end{framed}
\end{minipage}
\hspace{1pt}
\begin{minipage}{.35\textwidth}
\begin{framed}
\begin{center}
\textbf{Data}
\end{center}
\includegraphics[trim={0 0 0 0}, clip, width=\textwidth]{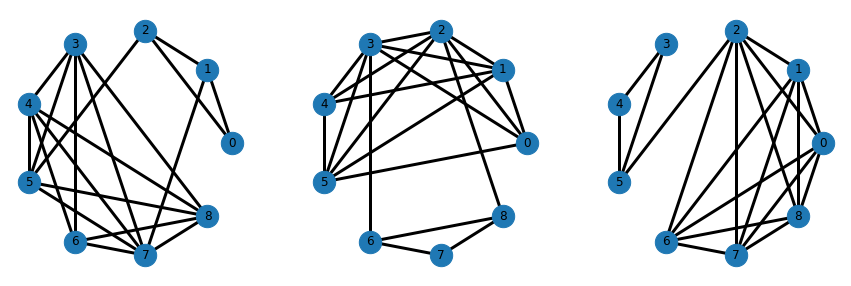}
\hrule
\begin{center}
\textbf{Means}
\end{center}
\begin{subfigure}[t]{.49\textwidth}
  \centering
\includegraphics[trim={0 10 0 0}, clip,width=\textwidth]{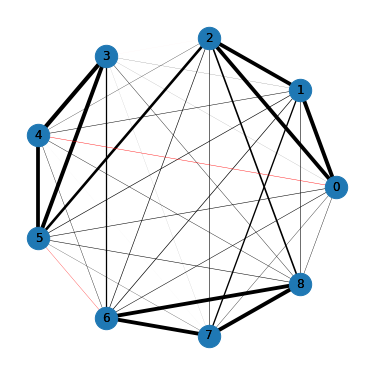}
  \caption*{Ours.}
    \end{subfigure} 
  \begin{subfigure}[t]{.49\textwidth}
  \centering
\includegraphics[trim={0 10 0 0}, clip,width=\textwidth]{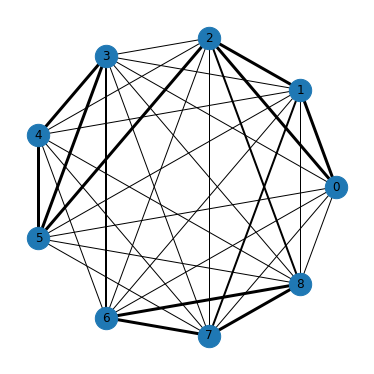}
  \caption*{Arithmetic.}
\end{subfigure} 
\begin{subfigure}[t]{.49\textwidth}
  \centering
\includegraphics[trim={0 10 0 0}, clip,width=\textwidth]{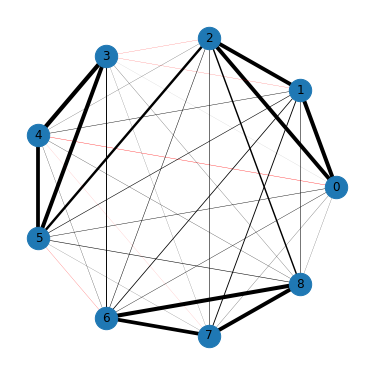}
  \caption*{Harmonic.}
\end{subfigure} 
     \begin{subfigure}[t]{.49\textwidth}
  \centering
\includegraphics[trim={0 10 0 0}, clip,width=\textwidth]{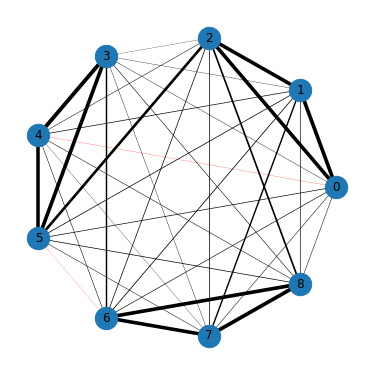}
  \caption*{Geometric.}
\end{subfigure}
\end{framed}
\end{minipage}%
\hfill 
\begin{minipage}[b]{.26\textwidth}
\caption{Illustration of different means for two sets of graphs. The edge widths are proportional to the absolute value of edge weights, and red edges correspond to negative edge weights.
Our method is compared with the arithmetic mean, harmonic mean, and the geometric (Karcher) mean of the given graph Laplacians.
} \label{fig:example} 
\end{minipage}
\end{figure*}
In this work we propose a novel graph mean that utilizes a recently introduced optimal transport distance for graphs \citep{petric2019got, petric2020wasserstein} as the distance $d(\cdot,\cdot)$ in \eqref{eq:graph_avg}.
It turns out that using this graph optimal transport distance corresponds to moving the graph mean problem to an embedding space of
zero-mean normal distributions equipped with the Bures-Wasserstein distance. We therefore name our novel graph mean the \emph{Bures-Wasserstein mean} of graphs.
More precisely, in the embedding space a graph is represented by the probability distribution corresponding to a smooth graph signal that varies slowly over strongly connected nodes.
A key idea of our work is that by taking the Wasserstein mean in the embedding space we retrieve mean graphs that may preserve the structural information captured by the smooth graph signals of the input graphs.
We develop the theoretical foundation for the novel Bures-Wasserstein graph mean by proving the existence and uniqueness of a minimizer of the corresponding graph mean problem \eqref{eq:graph_avg}.
Moreover, we provide an iterative algorithm for computing the Bures-Wasserstein graph mean.
The potential of our proposed graph mean as a valuable tool for graph machine learning is illustrated through a range of numerical experiments, where it achieves consistent performance, outperforms existing baseline methods, and improves the performance of state-of-the-art methods. 

The Bures-Wasserstein mean operates on a class of aligned and undirected graphs with edge weights that may be negative.
Such signed weighted graphs are a powerful tool to describe complex systems with positive as well as negative interactions. Therefore, they naturally appear in many applications. For instance, in gene regulatory networks and brain networks negative edge-weights can be used to describe inhibition or repression, and in social networks negative edge-weights often model antagonistic relationships \citep{chen2016characterizing, chen2016definiteness, olfati2007consensus}.

Overall, our main contributions are summarized as follows. 
\begin{itemize}  
\item We introduce a novel mean for a set of aligned graphs, which may have negative edge weights. It operates on the smooth graph signal distributions and can thus preserve structural information of the input graphs.  
\item The proposed Bures-Wasserstein mean is defined as the solution to an optimization problem of the form \eqref{eq:graph_avg}, and
we prove the existence and uniqueness of a solution to this problem. 
\item We present a new iterative algorithm for computing the Bures-Wasserstein graph mean.
\item We illustrate the potential of our framework for graph machine learning problems, like k-means clustering of graphs and functional brain network classification. Moreover, we demonstrate that the performance of a state-of-the-art method for semi-supervised node classification in multi-layer graphs can be improved with the Bures-Wasserstein mean.
\end{itemize}

Since the Bures-Wasserstein mean acts on signed weighted graphs that are aligned, it is
most related to a class of means that act on the graphs' Laplacian matrices, for example arithmetic, harmonic, or various geometric and power means \citep{el2020orthonet, mercado2019generalized, mercado2019spectral}.
Figure~\ref{fig:example} illustrates the advantage of our suggested mean over existing notions of means for graph Laplacian matrices.
Our Bures-Wasserstein mean is able to identify communities and captures connections between them in both settings.
Other methods do not perform well for both tasks. In particular, the arithmetic mean $\sum_{j=1}^m \lambda_j L_j$ of a set of graph Laplacians $L_1,\dots,L_m$ averages the weight of each edge individually. This local averaging does often not capture global properties of the graphs, such as communities (Figure~\ref{fig:example} right).
On the other hand, the harmonic mean $ \left( \sum_{j=1}^m \lambda_j L_j^{\dag} \right)^{\dag}$ is based on the arithmetic mean of the pseudo-inverses of the graph Laplacians. The pseudo-inverses describe correlations between all nodes on a global level, but do not describe local graph properties well. In the case of sparse input graphs, the harmonic mean is thus prone to creating spurious edges that are not present in any of the original graphs (Figure~\ref{fig:example} left). The geometric mean \citep{lim2012matrix} suffers from the same issue.

Other graph means focus on the case of unweighted graphs.
For instance, the graph mean with respect to the Hamming distance is in fact a modification of the arithmetic mean of Laplacians for unweighted graphs \citep{meyer2022frechet}.
We finally note that another optimal transport based graph mean has been defined using the Gromov-Wasserstein distance \citep{peyre2016gromov, vayer2019optimal, xu2019scalable, xu2019gromov},
which is however fundamentally different from the Bures-Wasserstein mean. 
In particular, in contrast to Gromov-Wasserstein barycenters, our method considers edge weights and preserves node correspondences, and is thus suitable for
multi-layer graphs or graphs with predefined node sets. \looseness -1

To the best of our knowledge, the novel Bures-Wasserstein graph mean is the first graph mean that utilizes the relation between graphs and their smooth graph signal representations,
which is a powerful connection, that is well established for graph learning settings \citep{dong2016learning, dong2019learning, dong2020graph, kalofolias2016learn}.
We expect that these theoretical and practical results provide a solid foundation for the use of the Bures-Wasserstein graph mean as a novel powerful operator in the graph machine learning toolbox.


\section{BACKGROUND}

In this section we present some background on graph signal processing and optimal transport theory that will be used in Section~\ref{sec:main} to prove our main results.

\subsection{Graph signal processing} \label{subsec:gsp}
Graph signal processing aims to generalize classical signal processing concepts, such as Fourier transforms and filters to the graph domain \citep{dong2020graph, mateos2019connecting, ortega2018graph}.
Let $G=(V,E, W)$ be an undirected weighted graph, where $V$ denotes a set of $N$ vertices, $E$ denotes a set of edges, and $W\in  \mathbb{R}^{N\times N}$ is the weighted adjacency matrix with element $W_{ik}$ denoting the weight\footnote{Note that we also allow for negative edge weights in contrast to the previous works \citep{ petric2020wasserstein, petric2019got}.} of edge $(i,k)$. 
Moreover, define the degree matrix $D \in \mathbb{R}^{N\times N}$, which is a diagonal matrix with elements $D_{ii} = \sum_{k=1}^n W_{ik}$.
Then, the signed
weighted graph Laplacian matrix is defined as $L=D-W$.

From a signal processing perspective, features on a graph can be modeled as a graph signal, which is a mapping $x:V \to \mathbb{R}^N$.
Consider the spectral decomposition of the corresponding graph Laplacian $L=V \Lambda V^T$, where $\Lambda \in \mathbb{R}^{N\times N}$ is a diagonal matrix containing the eigenvalues of $L$, and the columns of $V\in \mathbb{R}^{N\times N}$ are the corresponding eigenvectors.
Then a graph Fourier transform of a graph signal $x$ can be defined as $\tilde x = V^T x$.
The graph Fourier transform provides a decomposition of the graph signal into different frequency components.
To see this consider the total variation of a graph signal, defined as
\begin{equation} \label{eq:tv}
    TV(x) = x^T L x = \sum_{i,j=1}^N W_{ij} (x_i-x_j)^2.
\end{equation}
By plugging in an eigenvector $v_k$ to the corresponding eigenvalue $\lambda_k$, one can see that $TV(v_k) = v_k^T L v_k = \lambda_k$.
Hence, small eigenvalues are associated with small variation over connected nodes, which can be interpreted as low frequency contents of the signal $x$.

Moreover, the spectral decomposition of the graph Laplacian gives rise to a factor analysis model \citep{tipping1999probabilistic} for graph signals, see \citet{dong2016learning}. Namely, assume that the (normalized) observed graph signal $x$ is generated by $x= V h + \epsilon$, where $h\in \mathbb{R}^N$ represent latent variables in Fourier space, and $\epsilon \sim \mathcal{N}(0,\sigma_\epsilon^2 I_n)$ for a scalar $\sigma_\epsilon$.
By imposing the Gaussian prior $h\sim \mathcal{N}(0,\Lambda^\dag)$ on the latent variables, one assumes that the energy of the graph signal lies mostly in the low frequency components.
Using this prior in the factor analysis model generating $x$, the resulting noise-less signal has the distribution $x \sim \mathcal{N}(0,L^\dag)$.
That is, signals sampled from this distribution are smooth in the sense that they have small total variation \eqref{eq:tv}, i.e., they vary slowly over strongly connected nodes. \looseness=-1

\subsection{Optimal transport} \label{subsec:ot}
Optimal transport is a classical topic in mathematics that has originally been used in economics and operations research \citep{villani2021topics}, and has now become a popular tool in fields such as machine learning \citep{arjovsky2017wasserstein, peyre2019computational}, computer vision \citep{solomon2016entropic}, and signal processing \citep{elvander2020multi, kolouri2017optimal}.
The optimal transport problem seeks a map that moves the mass from one probability distribution to another one in the most efficient way with respect to an underlying cost, which is often defined as the squared Euclidean distance. 
Let $\mu_0$ and  $\mu_1$ be two probability measures\footnote{Note that a probability measure defines a probability distribution. We often use the terms measure and distribution interchangeably.}
on the space $\mathbb{R}^n$.
We denote the set of probability measures in $\mathbb{R}^n \times \mathbb{R}^n$ with marginals $\mu_0$ and $\mu_1$ as $\Pi (\mu_0,\mu_1)$.
The measures $\pi \in \Pi(\mu_0,\mu_1)$ are called transport plans between the given marginals, and $\pi(x,y)$ is associate with the infinitesimal amount of mass transported from $x$ to $y$. 
The (Kantorovich) optimal transport problem is to find the transport plan between $\mu_0$ and $\mu_1$ with the smallest associated transportation cost, that is to minimize
\begin{equation} \label{eq:W2squared} 
\left(\mathcal{W}_2(\mu_0,\mu_1) \right)^2 = \inf_{\pi \in \Pi(\mu_0,\mu_1)} \int_{\mathbb{R}^n \times \mathbb{R}^n} \|x-y\|_2^2 \ d\pi(x,y).
\end{equation}
The optimal objective value of this problem defines the Wasserstein distance $\mathcal{W}_2(\cdot,\cdot)$ \citep{villani2021topics}.
This distance defines a metric on the space of absolutely continuous probability measures on $\mathbb{R}^n$ with finite second moment, denoted $\mathcal{P}_{2}(\mathbb{R}^n)$,
and the Fr\'echet mean on the metric space $(\mathcal{P}_{2}(\mathbb{R}^n), \mathcal{W}_2)$ is called the Wasserstein barycenter \citep{agueh2011barycenters}.
However, note that the Wasserstein barycenter problem can even be defined for measures that are not necessarily absolutely continuous.
Given a set of measures $\mu_1,\dots,\mu_m$ 
and a set of weights $\lambda_1,\dots,\lambda_m > 0$ such that $\sum_{j=1}^n \lambda_j = 1 $, the weighted Wasserstein barycenter is the minimizer of the optimization problem
\begin{equation} \label{eq:barycenter} 
\bar \mu = \argmin[\mu] \sum_{j=1}^m \lambda_j \left( \mathcal{W}_2(\mu,\mu_j) \right)^2.
\end{equation}

Although the optimal transport problem \eqref{eq:W2squared} is in general challenging to solve, in case the given probability distributions are Gaussian it has an analytical solution \citep{ dowson1982frechet, olkin1982distance, takatsu2010wasserstein}. More precisely, consider two measures $\mu_0 \sim \mathcal{N}(0,\Sigma_0)$ and $\mu_1 \sim \mathcal{N}(0,\Sigma_1)$, describing zero-mean 
normal distributions with covariance matrices $\Sigma_0, \Sigma_1$.
Then the Wasserstein distance between these probability distributions is given by 
\begin{equation} \label{eq:BWdistance}
\begin{aligned}
 \left(\mathcal{W}_2(\mu_0,\mu_1) \right)^2 = & \ \text{trace} \left(\Sigma_0\right) + \text{trace} \left(\Sigma_1\right) \\
&  - 2 \cdot \text{trace}\left(  \sqrt{\Sigma_0^{1/2} \Sigma_1 \Sigma_0^{1/2} } \right) . 
\end{aligned}
\end{equation}
Unfortunately, the barycenter \eqref{eq:barycenter} of a set of normal distributions given by $\mu_{j} \sim \mathcal{N}(0, \Sigma_j)$, for $j=1,\dots,m$, does not have an analytical solution when $m>2$ \citep{agueh2011barycenters}.
However, 
in the case that the given distributions are non-degenerate 
it is known that the barycenter is also a zero mean normal distribution $\bar{\mu} \sim \mathcal{N}(0, \bar{\Sigma})$, and its covariance matrix $\bar{\Sigma}$ is the unique positive definite solution to the matrix equation \citep{agueh2011barycenters} 
\begin{equation} \label{eq:bary_fb}
S = \sum_{j=1}^m \lambda_j \left( S^{1/2} \Sigma_j S^{1/2} \right)^{1/2}. 
\end{equation}
Note that zero-mean Gaussians are fully parametrized by their covariance matrices. In fact, there are important connections between the space of normal distributions, and the space of symmetric positive definite matrices.
Namely, the Wasserstein distance also defines a metric on the space of symmetric positive-definite matrices, and is in this context also called the Bures-Wasserstein metric \citep{bhatia2019bures}.
The barycenter problem \eqref{eq:barycenter} can thus be interpreted as taking the mean of a set of symmetric positive-definite matrices $\Sigma_0,\dots,\Sigma_m$ with respect to the Bures-Wasserstein metric \citep{bhatia2019inequalities}, and this problem is also solved by the solution to the matrix equation \eqref{eq:bary_fb}.
In the next Section we will extend these results in order to define a novel mean for a set of graphs.

\section{BURES-WASSERSTEIN GRAPH MEANS} \label{sec:main}

The core idea of the present work is that embedding graphs into a space of distributions equipped with the distance \eqref{eq:BWdistance} provides an elegant and simple framework for considering graphs as statistical data.
Specifically, in this section we define a notion of graph mean \eqref{eq:graph_avg} through an instance of the Wasserstein barycenter problem \eqref{eq:barycenter} and characterize it as the solution of a matrix equation similar to \eqref{eq:bary_fb}. Based on this, we provide an algorithm for computing the graph mean.
\subsection{Graph mean}
In this work we will use the following rather general assumption.
\begin{assumption} \label{ass:psd}
Let $L\in\mathbb{R}^{N \times N}$ be a signed
weighted graph Laplacian that is positive semi-definite and has only one zero eigenvalue.
\end{assumption} 
This assumption is discussed in detail in the supplementary material.
Here we note that all connected graphs with non-negative edge weights have Laplacians that satisfy Assumption~\ref{ass:psd}, and conversely all signed
weighted graphs with Laplacian matrices satisfying Assumption~\ref{ass:psd} are connected \citep{chen2016characterizing, chen2016definiteness}.

Graphs with semi-positive Laplacian matrix can further be represented by their smooth graph signal distribution, as described in Section~\ref{subsec:gsp}.
Based on the connection between graphs and their smooth signal representation, we define the Bures-Wasserstein distance for graphs as follows. 
\begin{definition} \label{def:dist}
Let $G_0$ and $G_1$ be two aligned graphs with signed weighted graph Laplacian matrices $L_0$ and $L_1$ satisfying Assumption~\ref{ass:psd}, and consider two measures $\mu_{G_j} \sim \mathcal{N}(0, L_j^\dag)$, for $j=0,1$.
The Bures-Wasserstein distance between the graphs $G_0$ and $G_1$ is defined as $d_{BW}(G_0,G_1)  := \mathcal{W}_2(\mu_{G_0}, \mu_{G_1})$.
\end{definition} 
Let $\mathcal{G}$ be a set of aligned signed weighted graphs with $N$ nodes and Laplacians that satisfy Assumption~\ref{ass:psd}. We define the Bures-Wasserstein mean of the graphs $G_1,\dots,G_m \in \mathcal{G}$ analogously to \eqref{eq:graph_avg} as the solution to
\begin{equation} \label{eq:graph_avg_BW}
\begin{aligned}
\bar{G} &= \argmin[G \in \mathcal{G}] \sum_{j=1}^m \lambda_j d_{BW}(G,G_j)^2 \\ %
&= \argmin[G \in \mathcal{G}] \sum_{j=1}^m \lambda_j \mathcal{W}_2(\mu_{G}, \mu_{G_j})^2.  
\end{aligned}
\end{equation}
We now characterize the solutions to the optimization problem \eqref{eq:graph_avg_BW}.
Note that the Bures-Wasserstein graph mean has similarity with the Wasserstein barycenter \eqref{eq:barycenter}. However, it is important to note that the theory from Section~\ref{subsec:ot} cannot be applied directly.
Firstly, the feasibility set in \eqref{eq:graph_avg_BW} is more restrictive than in \eqref{eq:barycenter}, since the barycenter distribution $\mu_G$ must be of the specific form $\mu_G \sim \mathcal{N}(0, L^\dag)$, where $L$ satisfies Assumption~\ref{ass:psd}.
Secondly, smooth graph signal distributions are degenerate Gaussians, but the characterization \eqref{eq:bary_fb} and the uniqueness of the barycenter have only been proved for non-degenerate Gaussians \citep{agueh2011barycenters, xia2014synthesizing}.
Thus, extending the theory in Section~\ref{subsec:ot} we now prove the existence and uniqueness of a solution $\bar{G}$ to problem \eqref{eq:graph_avg_BW}.

\begin{theorem} \label{thm:bary_unique}
Let $G_j$, for $j=1,\dots,m$, be 
graphs with signed weighted graph Laplacians $L_j$ that satisfy Assumption~\ref{ass:psd}. 
Then the Bures-Wasserstein mean \eqref{eq:graph_avg_BW} of these graphs is described by the Laplacian matrix $L=S^\dag$, where $S$ is the unique positive semi-definite symmetric solution to 
\begin{equation} \label{eq:bary_eq_degen}
S = \sum_{j=1}^m \lambda_j \left( S^{1/2} L_j^\dag S^{1/2} \right)^{1/2} 
\end{equation}
that satisfies $\text{range}(S)=\mathcal{R} = (\text{span}\{\mathbf{1}_N\})^\perp$. 
\end{theorem}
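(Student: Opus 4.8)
The plan is to reduce the problem to the classical (non-degenerate) Gaussian barycenter theory of Agueh--Carlier recalled in \eqref{eq:barycenter}--\eqref{eq:bary_fb}, by restricting all measures to the common $(N-1)$-dimensional subspace $\mathcal{R}=(\text{span}\{\mathbf{1}_N\})^\perp$ and then pulling the resulting matrix equation back to $\mathbb{R}^N$. First I would record the linear-algebraic consequences of Assumption~\ref{ass:psd}: each $L_j$ is positive semi-definite with $\ker L_j=\text{span}\{\mathbf{1}_N\}$, hence $L_j^\dag$ is positive semi-definite with $\text{range}(L_j^\dag)=\mathcal{R}$ and $\ker(L_j^\dag)=\mathcal{R}^\perp$, so each embedding $\mu_{G_j}\sim\mathcal{N}(0,L_j^\dag)$ is supported on the linear subspace $\mathcal{R}$. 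I then claim that \emph{any} minimizer over $\mathcal{P}_2(\mathbb{R}^N)$ of the barycenter functional $\mu\mapsto\sum_j\lambda_j\mathcal{W}_2(\mu,\mu_{G_j})^2$ must itself be supported on $\mathcal{R}$: pushing the optimal transport plan from a candidate $\nu$ to each $\mu_{G_j}$ through the nearest-point projection $\pi_{\mathcal{R}}$ does not increase --- and strictly decreases whenever $\nu$ charges $\mathbb{R}^N\setminus\mathcal{R}$ --- each term $\mathcal{W}_2(\cdot,\mu_{G_j})^2$, by the obtuse-angle inequality $\|x-y\|^2\ge\|x-\pi_{\mathcal{R}}(x)\|^2+\|\pi_{\mathcal{R}}(x)-y\|^2$ valid for $y\in\mathcal{R}$.

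Next I would fix a matrix $U\in\mathbb{R}^{N\times(N-1)}$ whose columns form an orthonormal basis of $\mathcal{R}$, so that $z\mapsto Uz$ is an isometry $\mathbb{R}^{N-1}\to\mathcal{R}$ and $\mathcal{W}_2$ between measures supported on $\mathcal{R}$ is unchanged under the pullback by $U^\top$. Under this pullback $\mu_{G_j}$ becomes the centered Gaussian on $\mathbb{R}^{N-1}$ with covariance $\widetilde{\Sigma}_j:=U^\top L_j^\dag U$, which is \emph{positive definite} precisely because $\ker(L_j^\dag)=\mathcal{R}^\perp$. Applying the cited result \cite{agueh2011barycenters} to $\widetilde{\Sigma}_1,\dots,\widetilde{\Sigma}_m$ then yields a unique Wasserstein barycenter, the centered Gaussian $\mathcal{N}(0,\widetilde{S})$ where $\widetilde{S}\succ0$ is the unique positive definite solution of $\widetilde{S}=\sum_{j}\lambda_j(\widetilde{S}^{1/2}\widetilde{\Sigma}_j\widetilde{S}^{1/2})^{1/2}$. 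Combined with the previous paragraph, the minimizer over $\mathcal{P}_2(\mathbb{R}^N)$ exists, is unique, and equals $\bar\mu=\mathcal{N}(0,S)$ with $S:=U\widetilde{S}U^\top$, which is positive semi-definite with $\text{range}(S)=\mathcal{R}$.

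To conclude I would translate the fixed-point equation back to $\mathbb{R}^N$ via the functional-calculus identity: for any positive semi-definite $A$ with $\text{range}(A)\subseteq\mathcal{R}$ one has $A^{1/2}=U(U^\top A U)^{1/2}U^\top$, since the right-hand side is positive semi-definite, uses $U^\top U=I_{N-1}$, and squares to $A$. Applying this to $S$ and to each $S^{1/2}L_j^\dag S^{1/2}=U(\widetilde{S}^{1/2}\widetilde{\Sigma}_j\widetilde{S}^{1/2})U^\top$ shows that \eqref{eq:bary_eq_degen} for a positive semi-definite $S$ with range $\mathcal{R}$ is \emph{equivalent}, by conjugation with $U$, to the $(N-1)$-dimensional equation for $\widetilde{S}$; hence $S=U\widetilde{S}U^\top$ solves it and is the unique positive semi-definite solution with range $\mathcal{R}$. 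Finally $L:=S^\dag$ is symmetric positive semi-definite with $\ker(L)=\ker(S)=\text{span}\{\mathbf{1}_N\}$, so $L\mathbf{1}_N=0$; thus $L$ has zero row sums and therefore $L=D-W$ is a genuine signed graph Laplacian satisfying Assumption~\ref{ass:psd}. The corresponding labeled graph $\bar G$ then lies in $\mathcal{G}$ and satisfies $\mu_{\bar G}=\mathcal{N}(0,L^\dag)=\mathcal{N}(0,S)=\bar\mu$; since $\bar\mu$ minimizes the barycenter functional over all of $\mathcal{P}_2(\mathbb{R}^N)\supseteq\{\mu_G:G\in\mathcal{G}\}$ and $G\mapsto\mu_G$ is injective on labeled graphs, $\bar G$ is the unique minimizer of \eqref{eq:graph_avg_BW}.

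The step I expect to be the main obstacle is the degenerate-to-non-degenerate reduction itself: on one hand making precise that the $\mathbb{R}^N$ barycenter cannot place mass outside $\mathcal{R}$ --- this is exactly what licenses invoking the classical existence/uniqueness theorem, which is open for general degenerate Gaussians --- and on the other hand verifying that both the matrix square roots and the fixed-point structure in \eqref{eq:bary_eq_degen} are faithfully preserved under compression by $U$; everything else is bookkeeping.
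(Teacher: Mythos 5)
Your proposal is correct and follows essentially the same route as the paper's proof: restrict to the common support $\mathcal{R}=(\mathrm{span}\{\mathbf{1}_N\})^\perp$ via the projection/obtuse-angle argument, pull back isometrically to $\mathbb{R}^{N-1}$ with an orthonormal basis matrix $U$ so that the covariances $U^\top L_j^\dag U$ become positive definite, invoke the Agueh--Carlier existence and uniqueness result there, and transport the fixed-point equation back via $S=U\widetilde{S}U^\top$. The paper packages the isometry step as a standalone lemma and verifies the conjugation identity by direct computation rather than citing the functional-calculus fact, but these are presentational differences only.
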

\begin{proof}[Proof sketch:]
We first prove the existence and uniqueness of a solution to the Wasserstein barycenter problem \eqref{eq:barycenter} in the special case, where the given distributions are degenerate Gaussians, which are embeddings of graphs in $\mathcal{G}$.
We then show that this solution in fact describes a smooth graph signal.
An essential observation here is that Assumption~\ref{ass:psd} guarantees that the degeneracy of all given measures $\mu_j \sim \mathcal{N}(0, L_j^\dag)$ lies in the subspace $\mathcal{R} = (\text{span}\{\mathbf{1}_N\})^\perp \subset \mathbb{R}^N$.

See the supplementary material for a full proof.
\end{proof}

\subsection{Algorithm}
We now introduce a computational method for finding the Bures-Wasserstein graph mean, see Algorithm~\ref{alg:bary}.
This method is based on a fixed point iteration for solving the matrix equation \eqref{eq:bary_fb} that was first introduced in \citet[Theorem~4.2]{alvarez2016fixed}.
This fixed point iteration can be directly applied to find a solution to the matrix equation \eqref{eq:bary_eq_degen},  which characterizes 
the Bures-Wasserstein graph mean.
In addition, we exploit spectral properties of graph Laplacians as described in the following.
\begin{proposition} \label{prop:pd_transform}
Let $L_1,\dots,L_m \in \mathbb{R}^{N\times N}$ satisfy Assumption~\ref{ass:psd}, and let $\mathbf{1}_{N \times N}\in\mathbb{R}^{N \times N}$ denote a matrix of ones.
Then the matrices $\Sigma_j = L_j^\dag + \frac{1}{N} \mathbf{1}_{N \times N}$, for $j=1,\dots,m$ are symmetric and strictly positive definite.
Moreover, the Bures-Wasserstein mean \eqref{eq:graph_avg_BW} of the graphs with Laplacians $L_1,\dots,L_m$ is the graph with Laplacian $L = \bar{\Sigma} - \frac{1}{N} \mathbf{1}_{N \times N}$, where $\bar{\Sigma}$ is the Bures-Wasserstein barycenter of $\Sigma_1,\dots,\Sigma_m$.
\end{proposition}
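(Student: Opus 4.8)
The plan is to establish the two assertions in turn, with both resting on the orthogonal splitting $\mathbb{R}^N = \text{span}\{\mathbf{1}_N\} \oplus \mathcal{R}$, where $\mathcal{R} = (\text{span}\{\mathbf{1}_N\})^\perp$, and on the fact that $\tfrac{1}{N}\mathbf{1}_{N \times N} = \tfrac{1}{N}\mathbf{1}_N\mathbf{1}_N^\top$ is exactly the orthogonal projector onto $\text{span}\{\mathbf{1}_N\}$.

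\emph{Strict positive definiteness of $\Sigma_j$.} First I would note that every graph Laplacian satisfies $L_j \mathbf{1}_N = 0$, so Assumption~\ref{ass:psd} forces $\ker L_j = \text{span}\{\mathbf{1}_N\}$ exactly; hence $\text{range}(L_j^\dag) = \mathcal{R}$ and $L_j^\dag$ is positive definite when restricted to $\mathcal{R}$. Consequently $\Sigma_j = L_j^\dag + \tfrac{1}{N}\mathbf{1}_{N \times N}$ is symmetric and block diagonal with respect to the splitting above: it acts as the scalar $1$ on $\text{span}\{\mathbf{1}_N\}$ and as $L_j^\dag|_{\mathcal{R}} \succ 0$ on $\mathcal{R}$. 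Its spectrum is therefore $\{1\}$ together with the positive eigenvalues of $L_j^\dag$, so $\Sigma_j \succ 0$.

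\emph{The barycenter identity.} The key point is that the lift $X \mapsto X + \tfrac{1}{N}\mathbf{1}_{N \times N}$ intertwines the non-degenerate barycenter equation \eqref{eq:bary_fb} for the $\Sigma_j$ with the degenerate one \eqref{eq:bary_eq_degen} for the $L_j^\dag$. Indeed, every operation occurring in the right-hand side of \eqref{eq:bary_fb} — sums, products, and principal square roots — preserves block diagonality with respect to $\text{span}\{\mathbf{1}_N\} \oplus \mathcal{R}$. Writing a positive definite solution of \eqref{eq:bary_fb} for $\Sigma_1,\dots,\Sigma_m$ in block form, with scalar block $\sigma>0$ on $\text{span}\{\mathbf{1}_N\}$ and positive definite block $T$ on $\mathcal{R}$, the equation decouples: on the one-dimensional block it reads $\sigma = \sum_{j=1}^m \lambda_j \sqrt{\sigma} = \sqrt{\sigma}$ (using $\sum_j \lambda_j = 1$), whose only positive solution is $\sigma = 1$; on $\mathcal{R}$ it reads $T = \sum_{j=1}^m \lambda_j ( T^{1/2} L_j^\dag|_{\mathcal{R}} T^{1/2} )^{1/2}$, which is precisely \eqref{eq:bary_eq_degen} restricted to $\mathcal{R}$ and, by Theorem~\ref{thm:bary_unique} (equivalently by the classical result underlying \eqref{eq:bary_fb} applied on the $(N{-}1)$-dimensional space $\mathcal{R}$ to the positive definite matrices $L_j^\dag|_{\mathcal{R}}$), has the unique positive definite solution $T = S|_{\mathcal{R}}$ for the matrix $S$ of Theorem~\ref{thm:bary_unique}. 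Since \eqref{eq:bary_fb} has a unique positive definite solution on $\mathbb{R}^N$, and the block-diagonal matrix just assembled is one, it must equal $\bar\Sigma$; hence $\bar\Sigma = \tfrac{1}{N}\mathbf{1}_{N \times N} + S$, i.e. $S = \bar\Sigma - \tfrac{1}{N}\mathbf{1}_{N \times N}$. Inverting block by block then relates $\bar\Sigma^{-1}$ to $S^\dag$, and combined with the identity $L = S^\dag$ from Theorem~\ref{thm:bary_unique} this characterises the Laplacian of the Bures--Wasserstein mean in terms of $\bar\Sigma$, which is the content of the proposition.

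\emph{Main obstacle.} The delicate step is the decoupling argument: one must check carefully that block diagonality with respect to $\text{span}\{\mathbf{1}_N\} \oplus \mathcal{R}$ is genuinely propagated through the nonlinear map on the right-hand side of \eqref{eq:bary_fb} — in particular through the two nested matrix square roots — and then invoke the uniqueness from Section~\ref{subsec:ot} and Theorem~\ref{thm:bary_unique} to conclude that the full $N$-dimensional fixed point coincides with the block-diagonal matrix obtained by solving the $1\times 1$ and $(N{-}1)\times(N{-}1)$ blocks separately. The spectral description of $\Sigma_j$ and the final algebraic rearrangement are then routine.
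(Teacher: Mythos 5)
Your proof is correct and is essentially the paper's argument in different clothing: the paper's identities $(A+\tfrac{1}{N}\mathbf{1}_{N\times N})^\dag = A^\dag+\tfrac{1}{N}\mathbf{1}_{N\times N}$ and $(A+\tfrac{1}{N}\mathbf{1}_{N\times N})^{1/2} = A^{1/2}+\tfrac{1}{N}\mathbf{1}_{N\times N}$ are exactly your observation that all quantities are block diagonal with respect to $\mathrm{span}\{\mathbf{1}_N\}\oplus\mathcal{R}$ with the one-dimensional block acting as the scalar $1$, and both arguments then identify the resulting block-diagonal fixed point with $\bar{\Sigma}$ via uniqueness of the positive definite solution of \eqref{eq:bary_fb}. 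One remark: what you actually establish (and what the paper's own proof establishes, since it concludes $\bar{\Sigma}=L^\dag+\tfrac{1}{N}\mathbf{1}_{N\times N}$) is $S=\bar{\Sigma}-\tfrac{1}{N}\mathbf{1}_{N\times N}$, i.e.\ $L=\bar{\Sigma}^{-1}-\tfrac{1}{N}\mathbf{1}_{N\times N}$ as in the return line of Algorithm~\ref{alg:bary}, so the displayed formula $L=\bar{\Sigma}-\tfrac{1}{N}\mathbf{1}_{N\times N}$ in the statement appears to be a typo and your closing ``inverting block by block'' step is the correct final move and should be written out explicitly rather than left implicit.
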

\begin{proof}
    See supplementary material.
\end{proof}
Proposition~\ref{prop:pd_transform} permits to transform the singular graph Laplacians into positive definite matrices. 
This makes the algorithm efficient and stable, as we do not need to take square roots and pseudo-inverses of singular matrices.
\begin{algorithm}[tb] \caption{Bures-Wasserstein mean of graphs.} \label{alg:bary}
\begin{algorithmic}
\State {\bf Given:} $L_1,\dots,L_m \in \mathbb{R}^{N\times N}$ satisfying Assumption~\ref{ass:psd}, and initial SPD matrix $S\in \mathbb{R}^{N\times N}$
\State $\displaystyle{ \Sigma_j \gets \left(L_j  + \frac{1}{N} \mathbf{1}_{N\times N} \right)^{-1}}$ for $j=1,\dots,m$
\While{Not converged}
\State $ \displaystyle{ S \gets S^{-1/2} \left( \sum_{j=1}^m \lambda_j ( S^{1/2} \Sigma_j S^{1/2} )^{1/2} \right)^2 S^{-1/2},}$ 
\EndWhile
\State {\bf Return:} $\displaystyle{L \gets S^{-1} - \frac{1}{N} \mathbf{1}_{N\times N} }$
\end{algorithmic}
\end{algorithm} 
The fixed-point iteration in Algorithm~\ref{alg:bary} inherits the convergence guarantee proved in \citet{alvarez2016fixed}, and in practice we typically observe convergence within a few iterations.

\begin{table*}[]
\setlength{\tabcolsep}{4pt}
\centering
\caption{Difference of the perturbed graphs' mean to the original graph with respect to several metrics. } \label{tab:graph_fusion}
\begin{tabular}{c  c c c c c c}
& & & & & & Participation \\
 & B-W distance & Laplacian  & Covariance & Degree centrality & Modularity &  coefficient \\
 \midrule 
B-W mean & $ \mathbf{0.097 \pm 0.043}$ & $ \mathbf{2.26 \pm 0.11}$ & $\mathbf{0.15 \pm 0.12}$ & $\mathbf{0.036 \pm 0.002}$ & $ \underline{\mathbf{0.011 \pm 0.003}}$ &  $\underline{\mathbf{0.25 \pm 0.09}}$ \\
Arithmetic  & $ 0.170 \pm 0.073$ & $ \underline{\mathbf{1.85 \pm 0.10}} $ & $0.26 \pm 0.18$ & $ \underline{\mathbf{0.026 \pm 0.003}}$ & $0.015 \pm 0.003$ & $0.37 \pm 0.13$ \\
Harmonic & $\underline{\mathbf{0.095 \pm 0.037}}$ & $2.87 \pm 0.19$ & $\underline{\mathbf{0.13 \pm 0.10}}$ & $0.047 \pm 0.004$ & $\mathbf{0.012 \pm 0.002}$ & $\mathbf{0.26 \pm 0.09}$ \\
Geometric & $0.286 \pm 0.067$ & $6.42 \pm 0.31$ & $0.34 \pm 0.14$ & $0.091 \pm 0.009$ & $0.015 \pm 0.008$ & $0.40 \pm 0.13 $ \\
\end{tabular}
\end{table*}

\subsection{Extensions}
A special case of the Bures-Wasserstein graph mean problem is the setting where only two graphs are given, i.e., where $m=2$ in \eqref{eq:graph_avg_BW}.
Namely, by defining the weights as $\lambda_1=t$ and $\lambda_2=1-t$ for $t\in (0,1)$, the solution to \eqref{eq:graph_avg_BW} can be understood as the interpolation between two graphs.
For the classical Wasserstein problem this is also called displacement interpolation, and the solution for $t\in (0,1)$ defines a geodesic in the space $(\mathcal{P}_{2}(\mathbb{R}^n), \mathcal{W}_2)$ \citep{mccann1997convexity, villani2021topics}.
If the two given measures are Gaussian, there is a closed-form solution to the displacement interpolation problem.
These results can be extended to problem \eqref{eq:graph_avg_BW}, which gives a closed-form solution of the graph Laplacian interpolation. 
For completeness, we present the following result for this setting.
\begin{theorem} \label{thm:geodesic}
Let $G_0$ and $G_1$ be two graphs with signed weighted graph Laplacians $L_0$ and $L_1$ that satisfy Assumption~\ref{ass:psd}.
Then the Bures-Wasserstein mean of these two graphs with weights $1-t$ and $t$ for $t\in(0,1)$ has the signed weighted graph Laplacian $L_t = S_t^\dag$, where 
\begin{equation} \label{eq:interpolation}
S_t = L_0^{1/2} \! \left(  (1-t) L_0^\dag +t \left(  L_0^{\dag/2} L_1^\dag L_0^{\dag/2}  \right)^{1/2} \right)^2 \! L_0^{1/2} \!. %
\end{equation}
\end{theorem}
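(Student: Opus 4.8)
The plan is to reduce Theorem~\ref{thm:geodesic} to the two‑point case of Theorem~\ref{thm:bary_unique} and then recognize \eqref{eq:interpolation} as the covariance of a displacement interpolant. By Definition~\ref{def:dist} and \eqref{eq:graph_avg_BW}, the Bures--Wasserstein mean of $G_0,G_1$ with weights $1-t,t$ is the graph whose embedding minimizes $(1-t)\,\mathcal W_2(\mu,\mu_{G_0})^2+t\,\mathcal W_2(\mu,\mu_{G_1})^2$, and by Theorem~\ref{thm:bary_unique} this minimizer exists, is unique, and equals the (unconstrained) Wasserstein barycenter of $\mu_{G_0}\sim\mathcal N(0,L_0^\dag)$ and $\mu_{G_1}\sim\mathcal N(0,L_1^\dag)$. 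For two measures with weights $1-t,t$ the Wasserstein barycenter coincides with McCann's displacement interpolant $\mu_t=\big((1-t)\,\mathrm{id}+tT\big)_\#\mu_{G_0}$, where $T$ is the optimal map pushing $\mu_{G_0}$ onto $\mu_{G_1}$ \cite{mccann1997convexity,agueh2011barycenters}. Under Assumption~\ref{ass:psd} both embeddings are supported on the common subspace $\mathcal R=(\mathrm{span}\{\mathbf 1_N\})^\perp$ and are non-degenerate there, so the transport problem reduces to $\mathcal R$, on which $T$ is the classical linear Gaussian map; in ambient coordinates $T=L_0^{1/2}\big(L_0^{\dag/2}L_1^\dag L_0^{\dag/2}\big)^{1/2}L_0^{1/2}$, which is symmetric positive semi-definite with $T\mathbf 1_N=0$, and hence $\mu_t\sim\mathcal N(0,\Sigma_t)$ with $\Sigma_t=\big((1-t)I+tT\big)L_0^\dag\big((1-t)I+tT\big)$.

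The technical core is the identity $\Sigma_t=S_t$. Writing $K:=\big(L_0^{\dag/2}L_1^\dag L_0^{\dag/2}\big)^{1/2}$ and $M:=(1-t)L_0^\dag+tK$, so that $S_t=L_0^{1/2}M^2L_0^{1/2}$, I would expand both $\Sigma_t$ and $S_t$ and match them term by term, using that all matrices involved are simultaneously diagonalizable with $L_0$ together with the projections onto its eigenspaces, which gives the spectral identities $L_0^{1/2}L_0^\dag=L_0^\dag L_0^{1/2}=L_0^{\dag/2}$, $L_0^{1/2}L_0^{\dag/2}=L_0^{\dag/2}L_0^{1/2}=P_{\mathcal R}$ (the orthogonal projection onto $\mathcal R$), $L_0^{1/2}(L_0^\dag)^2L_0^{1/2}=L_0^\dag$, and, from $L_0^{\dag/2}\mathbf 1_N=0$, both $K\mathbf 1_N=0$ and $P_{\mathcal R}L_1^\dag P_{\mathcal R}=L_1^\dag$; the $(1-t)^2$, $t(1-t)$ and $t^2$ parts then agree and one obtains $\Sigma_t=(1-t)^2L_0^\dag+t(1-t)\big(L_0^{\dag/2}KL_0^{1/2}+L_0^{1/2}KL_0^{\dag/2}\big)+t^2L_1^\dag=S_t$. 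It remains to record the structural facts: $M$ is symmetric positive semi-definite with $\ker M=\mathrm{span}\{\mathbf 1_N\}$ (on $\mathcal R$ the summand $(1-t)L_0^\dag$ is already positive definite), whence $S_t=(ML_0^{1/2})^{\!\top}(ML_0^{1/2})$ is symmetric positive semi-definite, and a short kernel chase using $\ker L_0^{1/2}=\ker M^2=\mathrm{span}\{\mathbf 1_N\}$ and $\mathrm{range}(M^2)=\mathcal R$ gives $\ker S_t=\mathrm{span}\{\mathbf 1_N\}$, i.e.\ $\mathrm{range}(S_t)=\mathcal R$. Thus $S_t=L^\dag$ for $L:=S_t^\dag$, which is a signed graph Laplacian satisfying Assumption~\ref{ass:psd}, and since $\mu_t\sim\mathcal N(0,L^\dag)$ is the unique barycenter by Theorem~\ref{thm:bary_unique}, the Bures--Wasserstein mean has Laplacian $L_t=S_t^\dag$.

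I expect the main obstacle to be the careful bookkeeping of the degenerate direction $\mathbf 1_N$: every square root, pseudo-inverse, and transport map above must be read as an operator acting non-trivially only on $\mathcal R$, and one must check that $S_t$ stays exactly in the regime covered by Theorem~\ref{thm:bary_unique}, namely $\mathrm{range}(S_t)=\mathcal R$ rather than a smaller subspace. The non-commutativity of $L_0$ and $L_1$ makes the collapse $\Sigma_t\mapsto S_t$ a genuine, if elementary, matrix computation. A shorter route avoids $\mathcal R$ altogether via the regularization of Proposition~\ref{prop:pd_transform}: the shift $L_j^\dag\mapsto\Sigma_j=L_j^\dag+\tfrac1N\mathbf 1_{N\times N}$ produces honest positive-definite matrices for which the classical two-Gaussian Bures--Wasserstein interpolation formula $\Sigma_0^{-1/2}\big((1-t)\Sigma_0+t(\Sigma_0^{1/2}\Sigma_1\Sigma_0^{1/2})^{1/2}\big)^2\Sigma_0^{-1/2}$ holds, and the orthogonal-decomposition identities $\Sigma_0^{-1/2}=L_0^{1/2}+\tfrac1N\mathbf 1_{N\times N}$ and $(\Sigma_0^{1/2}\Sigma_1\Sigma_0^{1/2})^{1/2}=\big(L_0^{\dag/2}L_1^\dag L_0^{\dag/2}\big)^{1/2}+\tfrac1N\mathbf 1_{N\times N}$ collapse that expression to $S_t+\tfrac1N\mathbf 1_{N\times N}$; undoing the shift as in the proof of Proposition~\ref{prop:pd_transform} then yields $L_t=S_t^\dag$.
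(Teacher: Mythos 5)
Your proof is correct and follows essentially the same route as the paper's: both identify the two-point barycenter with the displacement interpolant $\big((1-t)I+tT\big)_{\#}\mu_{G_0}$ for the same explicit map $T=L_0^{1/2}\big(L_0^{\dag/2}L_1^\dag L_0^{\dag/2}\big)^{1/2}L_0^{1/2}$, compute $\Sigma_t=\big((1-t)I+tT\big)L_0^\dag\big((1-t)I+tT\big)$ and match it with $S_t$, and verify that $\mathrm{range}(S_t)=\mathcal{R}$ so that $S_t^\dag$ satisfies Assumption~\ref{ass:psd} (the only real difference being that the paper establishes uniqueness directly, by showing $T$ is the unique symmetric map with range in $\mathcal{R}$ satisfying $TL_0^\dag T=L_1^\dag$, while you defer to Theorem~\ref{thm:bary_unique}). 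The alternative route you sketch via the shift of Proposition~\ref{prop:pd_transform} is also sound but does not appear in the paper; your main argument is the paper's argument.
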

\begin{proof}
    See supplementary material.
\end{proof}
It is worth noting the connection between the analytical solution in the interpolation setting from Theorem~\ref{thm:geodesic} and the iteration in Algorithm~\ref{alg:bary}.
In particular, in the case that $m=2$, and when choosing the starting point $S_0=L_1^\dag$, the iteration in Algorithm~\ref{alg:bary} corresponds to \eqref{eq:interpolation} and thus converges in one step. 

Finally, we note that an extension of the Bures-Wasserstein distance for graphs can take into account filters on the graph signal distributions. This permits to emphasize different types of spectral information in the graph comparison \citep{petric2021fgot}.
For instance, employed with a low-pass filter, the Bures-Wasserstein distance predominantly captures differences in the global graph structures, whereas a high-pass filter focuses on local structures.
These distances can also be utilized in the graph mean problem \eqref{eq:graph_avg_BW} when the graph filter is bijective for the set of Laplacian matrices satisfying Assumption~\ref{ass:psd}. For more details see the supplementary material. 

\section{EXPERIMENTS} \label{sec:exp}

In this Section we illustrate the behavior of the Bures-Wasserstein graph mean experimentally in several machine learning problems.
In particular, we show that in many settings it provides a better representation of a set of aligned graphs than other graph means.
In all experiments we set the weights in \eqref{eq:graph_avg_BW} to $\lambda_j=1/m$ for $j=1,\dots,m$, where $m$ denotes the number of graphs.

\subsection{Graph fusion} \label{subsec:fusion}
We first illustrate that the Bures-Wasserstein mean graph preserves graph characteristics of the input graphs well. 
We create a stochastic block model graph with $50$ nodes and $5$ communities. The edge probability within the clusters is $0.3$, and the edge probability between clusters is $0.1$. From this graph we generate $100$ graphs by removing and adding $10$ random edges, respectively, while enforcing that all graphs are connected.
The Bures-Wasserstein barycenter of these $100$ graphs is computed using Algorithm~\ref{alg:bary} and compared to the original graph in terms of several metrics.
We also compute the same metrics for the arithmetic, harmonic, and geometric mean of the graph Laplacians.
This experiment is repeated $1000$ times, and the mean error with respect to the different graph metrics are presented in Table~\ref{tab:graph_fusion}.
Here, the error in the Laplacian and covariance matrix is measured in Frobenius-norm.
The degree centrality (for weighted graphs also called strength), modularity, and participation coefficient are computed as defined in \cite{oehlers2021graph}.
The difference in degree centrality and participation coefficient are presented in mean square error over all nodes, and the modularity difference in absolute value.

We see that our proposed method is among the two best-performing methods for all tested graph metrics.
Degree centrality describes local properties of the graph, and is thus represented well by the arithmetic mean, which takes the average weight of each edge.
Modularity on the other hand describes the global graph structure, and this is well represented by the harmonic mean.
The participation coefficient typically captures both local and global phenomena.
We see that the Bures-Wasserstein mean preserves all tested graph metrics well, which shows that it successfully fuses both local and global structural information from the input graphs. 

We note here that the geometric mean is not preserving any of the graph metrics well.
As the methods based on the geometric mean did not perform as well as the other baseline methods in our conducted experiments, we chose to omit them in the following sections for clarity and brevity.

\subsection{K-means clustering of graphs} \label{subsec:kmeans}
Next we apply our proposed framework to an unsupervised learning setting. 
The task in this experiment is to identify graphs that have the same number of communities using k-means clustering.
The dataset is constructed as follows.
We generate connected graphs with $50$ nodes and a varying number of communities, ranging from $1$ up to $N_C$, where $N_C \in \{4,5,6\}$.
These make up the $N_C$ classes that we aim to identify, and each class contains $20$ graphs.
The probability for two nodes in a community to be connected is $p$, where $p\in \left\{ 0.2, 0.25, 0.3 \right\}$.
The communities are connected on a line, that is, each community is connected to at most two other communities, and the connectivity between the neighboring communities is randomly picked from two possible patterns.
Each of these two patterns constructs one edge between the communities, between a different set of nodes\footnote{Example graphs of this dataset can be found in the supplementary material.}.
For each class of graphs, the edges within clusters thus have high variability, while the edges between clusters have low variability.
Successfully clustering these highly structured types of graphs requires the ability to capture both the community structure as well as the inter-community structure.

\begin{table*}
\setlength{\tabcolsep}{4pt}
\centering
\caption{Functional brain network misclassification rate ($\%$). } \label{tab:brain_classification} 
\begin{tabular}{l  c c c c c}
& & & & & Topological \\
 & Bures-Wasserstein & Arithmetic  & Harmonic  &  Topological &  + arithmetic \\
 \midrule 
Motor execution: left vs. right hand &  $\mathbf{13.2 \pm 8.3}$ & $24.7\pm11.8$ & $14.8\pm8.7$ & $33.6 \pm 9.6$ & $\mathbf{8.7 \pm 10.4}$  \\
Motor imagery: left vs. right hand  & $\mathbf{12.7\pm8.9}$ & $24.2\pm11.8$ & $15.3\pm10.5$ & $33.6 \pm 9.0$ & $\mathbf{7.4 \pm 9.8}$ \\
Motor execution: hands vs. feet   & $\mathbf{12.5\pm8.5}$ & $23.7\pm11.1$ & $15.2\pm9.6$ & $32.2 \pm 9.6$ & $\mathbf{10.0 \pm 10.7}$ \\
Motor imagery: hands vs. feet  & $\mathbf{12.9\pm8.9}$ & $25.5\pm10.2$ & $15.2\pm9.8$  & $33.4 \pm 9.2$ & $\mathbf{8.1 \pm 9.9}$ \\
\end{tabular} 
\end{table*}

We use a k-means clustering method, where the centroids are computed using either the Bures-Wasserstein mean, or the arithmetic or harmonic mean of the graph Laplacians, and the distance between graphs is measured according to the respective distance (Bures-Wasserstein distance, Frobenius norm of Laplacians, or Frobenius norm of pseudo-inverse Laplacians).
Moreover, we evaluate also the topological clustering methods in \citet{songdechakraiwut2021fast}, which compare persistence barcodes of the graphs using the Wasserstein distance. Note that the pure topological mean does not preserve the node alignment, and to address this, the authors propose a hybrid method of topological and arithmetic mean.

The classification performance of the different methods and for different values of $N_C$ (with fixed $p=0.2$) and respectively different values of $p$ (with fixed $N_C=5$) are summarized in Figure~\ref{fig:kmeans_class}.
\begin{figure}
\centering
\includegraphics[trim={0 5 0 10}, clip, width=\columnwidth]{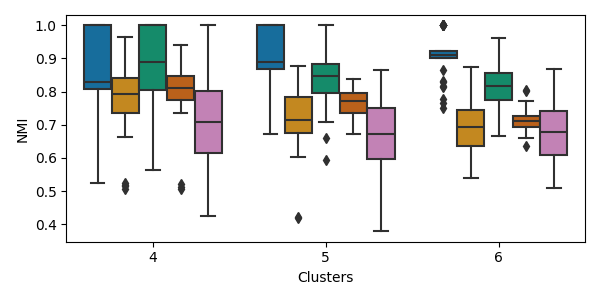} \hfill
\includegraphics[trim={0 10 0 10}, clip, width=\columnwidth]{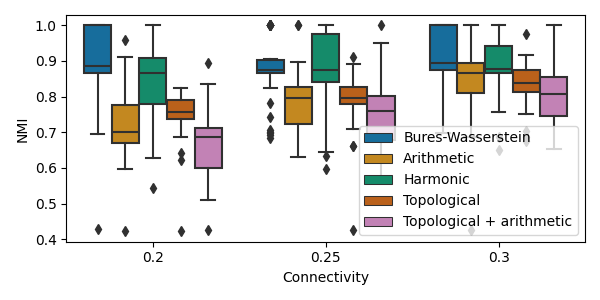}
\caption{Normalized mutual information of graph classification with k-means clustering for different values of number of classes (clusters), and inter-community edge probability (connectivity).}\label{fig:kmeans_class}
\end{figure}
One can see that for a small number of clusters, or respectively high connectivity within the communities, many of the tested methods perform similarly well.
However, the graph classification becomes difficult as the number of clusters increases, and as the connectivity within the communities decreases.
In these cases the Bures-Wasserstein method significantly outperforms the methods based on arithmetic and harmonic means.
Therefore, we can conclude that the distance in Definition~\ref{def:dist} captures similarity between the graphs well, and the mean graph \eqref{eq:graph_avg_BW} provides a useful representation of the cluster centroids. \looseness -1

\subsection{Classification of brain states} \label{subsec:brain}
We now show that the Bures-Wasserstein mean provides a useful centroid for graphs describing real world data. In particular, we utilize our methods to classify brain activity measurements from \citet{schalk2004bci2000, goldberger2000physiobank}\footnote{The dataset can be found at \url{https://mne.tools/0.18/generated/mne.datasets.eegbci.load_data.html}.}.
A subject is performing, or imagining, different types of motor tasks. Following a cue, the subject is contracting, or imagining to contract, the left hand vs. right hand, or the hands vs. feet.
Brain signals are measured from $64$ sensors placed around the head. 
For the brain reaction after each cue we construct a graph by assigning a node to each sensor.
We then use a 4 second interval after the cue to compute the envelope correlations between each set of sensors and assign it as an edge weight between the corresponding nodes.
We remove all edges that have an edge weight smaller than a threshold parameter, which is picked as large as possible while still resulting in a connected graph.
Each experiment trial consists of $15$ cues, which results in a set of $15$ graphs that are split into the two classes (either left vs. right hand, or hands vs. feet).
These sets of $15$ graphs are constructed for $109$ subjects, and $3$ experiment repetitions for each subject, resulting in a total of $327$ trials.
We compute the mean of the graphs for each of the two classes in the experiment, using the Bures-Wasserstein mean, the arithmetic and harmonic graph Laplacian mean, and the topological means in \citet{songdechakraiwut2021fast}.
We then compute the misclassification rate as the rate of graphs that are closest to the centroid of the wrong class with respect to the respective graph distance.
The resulting misclassification rate for the different sets of experiments, and using different types of graph means, are reported in Table~\ref{tab:brain_classification}.
In all experiments the Bures-Wasserstein mean provides centroids that describe the graphs in the two classes better than the centroids resulting from the arithmetic, harmonic, and topological means.
The only mean that outperforms the Bures-Wasserstein mean is the hybrid mean combining topological and arithmetic information, which was developed specifically for functional brain networks.
The fact that this hybrid mean performs significantly better than the pure arithmetic or topological mean indicates that a hybrid mean that combines the Bures-Wasserstein mean with topological information might drastically improve the clasification performance.
Developing a method for computing this mean is a promising direction of future research.
Note that the smooth graph signal representation for graphs is motivated by the standard assumption that real data samples are often well described by a normal distribution.
It is therefore reasonable to expect that the Bures-Wasserstein mean, which utilizes the smooth graph signal distributions, represents real world data well in many scenarios.

\subsection{Learning on multi-layer graphs} \label{subsec:node_class}
Finally, we show that the Bures-Wasserstein mean can serve as a regularizer for semi-supervised learning in a multi-layer graph.
A multi-layer graph is a set of graphs with the same set of nodes and different edges, which typically describe different types of interactions in a system.
We suggest the following classification method to learn unobserved labels from the structural information of the Bures-Wasserstein mean and some observed node labels, based on the state-of-the-art method introduced in \citet{mercado2019generalized}.
We consider that there are $k$ classes of nodes, and let $Y^{(r)} \in \mathbb{R}^N$ be defined by $Y^{(r)}_i = 1$ if the $i$-th node belongs to class $r$, and $Y^{(r)}_i = 0$ otherwise.
We then solve \looseness=-1
\begin{equation} \label{eq:node_class_opt}
    f^{(r)} = \argmin[f \in \mathbb{R}^N]  \| f - Y^{(r)} \|^2 + \rho f^T \bar{\mathcal{L}} f, 
\end{equation}  
where $\bar{\mathcal{L}}$ is some mean of a set of normalized graph Laplacians $\mathcal{L}_1,\dots,\mathcal{L}_m$ describing the multiple layers, and $\rho>0$ is a parameter that determines how much to rely on the given observations or the structure of the graphs.
Then the $i$-th node is assigned to the class $\text{arg} \max_r \{ f_i^{(r)} ; r=1,\dots,k \}.$
Here, we propose to define $\bar{\mathcal{L}}$ in the optimization problem \eqref{eq:node_class_opt} as the Bures-Wasserstein mean of the given graph Laplacians.
The set of 
normalized graph Laplacians are defined as in \citet{mercado2019generalized}, i.e.,
$\mathcal{L} = D^{-1/2} L D^{-1/2} $, where $L$ is the combinatorial graph Laplacian, and $D$ the degree matrix, as defined in Section~\ref{sec:main}.
We use the normalized Laplacians $\mathcal{L}_j$ for each layer $j$
here\footnote{In contrast to the previous examples, using normalized graph Laplacians is possible in this application, since we do not need to explicitly recover a graph representation $\bar G$ corresponding to the mean graph Laplacian $\bar{\mathcal{L}}$.}
to define the smooth graph signals $\mu_{G_j} \sim \mathcal{N}(0,\mathcal{L}_j^{\dag})$. 

We compare the performance of the node classification task utilizing the Bures-Wasserstein mean
with the methods in \citet{mercado2019generalized}, which utilize different types of power means to define an average graph Laplacian $\bar{\mathcal{L}}$ in \eqref{eq:node_class_opt}.
More precisely, the family of power means of the Laplacians $\mathcal{L}_j$, for $j=1,\dots,m$, is defined as $\bar{\mathcal{L}^p} = ( \sum_{j=1}^m \mathcal{L}_j^{p} )^{1/p} $. In particular, $p=1$ and $p=-1$ correspond to the arithmetic and harmonic mean, respectively. In addition to these two means, we also compare our method to the $p=-10$ power mean.
The regularization parameter is picked as the element from the set $\{0.1,1,10\}$ that gives the best results on the tested data. That is, $\rho=10$ for the arithmetic mean, $\rho=0.1$ for the harmonic and the power mean with $p=-10$ (as in \citet{mercado2019generalized}), and $\rho=1$ for the Bures-Wasserstein mean.

We consider multi-layer graphs constructed as in \citet{mercado2019generalized}, where a $10$-nearest neighbour graph is constructed for each layer of the real-world datasets \emph{3sources}, $\emph{BBC}$, \emph{BBCS} and \emph{Wikipedia}.
Several metrics for these datasets are presented in the supplementary material.
The average classification errors from $50$ trials for each dataset and with varying percentage of observed nodes are presented in Table~\ref{tab:node_classification}.
\begin{table}[t]
\setlength{\tabcolsep}{5pt}
\centering
\caption{Test error ($\%$) of node classification for different percentage of observed node labels.} \label{tab:node_classification} 
\begin{tabular}{c | c c c c c c c }
$p$ & $5\%$ & $10\%$ & $15\%$ & $20\%$ & $25\%$ & $30\%$ & $40\%$ \\
\midrule
\multicolumn{8}{l}{ \bf 3sources}\\
$1$  & $26.8$ & $23.4$ & $21.4$ & $16.7$ & $\mathbf{\underline{14.8}}$ & $\mathbf{\underline{13.3}}$ &  $\mathbf{\underline{11.2}}$  \\
$-1$ &  $\mathbf{\underline{22.5}}$ & $\mathbf{21.7}$ & $22.7$ & $19.3$ & $18.8$ & $17.9$  & $17.7$   \\
$-10$ & $31.2$ & $22.4$ & $\mathbf{20.6}$ & $\mathbf{16.2}$ & $\mathbf{14.9}$ &  $\mathbf{13.5}$ & $12.0$  \\ 
BW &  $\mathbf{24.9}$ & $\mathbf{\underline{21.3}}$ & $\mathbf{\underline{20.2}}$ & $\mathbf{\underline{15.9}}$ & $\mathbf{14.9}$ & $13.6$ &  $\mathbf{11.9}$ \\
\midrule 
 \multicolumn{8}{l}{\bf BBC} \\
$1$ 
& $22.6$& $17.1$ & $12.9$ & $10.6$ & $9.3$ & $\mathbf{8.2}$ &  $\mathbf{7.2}$ \\
$-1$ 
& $\mathbf{\underline{16.8}}$ & $\mathbf{\underline{11.7}}$ & $\mathbf{\underline{10.0}}$ & $\mathbf{9.5}$ & $\mathbf{9.0}$ & $8.5$ &  $8.2$  \\
$-10$
& $26.7$ & $16.4$ & $12.4$ & $10.5$ & $9.6$ & $8.6$ & $7.7$ \\ 
BW 
& $\mathbf{20.0}$ & $\mathbf{13.3}$ & $\mathbf{10.5}$ & $\mathbf{\underline{9.0}}$ & $\mathbf{\underline{8.4}}$ &  $\mathbf{\underline{7.4}}$ &  $\mathbf{6.9}$ \\
\midrule 
 \multicolumn{8}{l}{ \bf BBCS}  \\
$1$ & $16.1$& $13.3$ & $11.0$ & $8.8$ & $7.3$ & $6.3$ & $5.2$  \\
$-1$ & $\mathbf{\underline{12.3}}$ & $\mathbf{\underline{8.2}}$ & $\mathbf{\underline{6.5}}$ & $\mathbf{\underline{5.9}}$ & $\mathbf{\underline{5.4}}$ & $\mathbf{\underline{5.1}}$ &  $\mathbf{4.8}$ \\
$-10$ & $23.7$ & $13.8$ & $9.7$ & $7.7$ & $6.3$ & $5.6$ & $4.9$ \\ 
BW &  $\mathbf{15.8}$ & $\mathbf{10.4}$ & $\mathbf{8.0}$ & $\mathbf{6.5}$ & $\mathbf{5.6}$ & $\mathbf{5.2}$ &  $\mathbf{\underline{4.7}}$   \\
\midrule 
  \multicolumn{8}{l}{\bf Wikipedia} \\
$1$
& $59.6$& $53.6$ & $48.3$ & $44.5$ & $41.2$ & $38.4$ &  $35.3$ \\
$-1$ 
& $\mathbf{49.8}$ & $\mathbf{40.1}$ & $\mathbf{36.4}$ & $\mathbf{34.7}$ & $\mathbf{33.8}$ & $\mathbf{33.0}$ & $\mathbf{32.0}$  \\
$-10$ 
& $54.6$ & $43.3$ & $38.4$ & $35.9$ & $34.4$ & $33.2$ & $\mathbf{\underline{31.9}}$\\ 
BW 
 &  $\mathbf{\underline{49.0}}$ & $\mathbf{\underline{39.6}}$ & $\mathbf{\underline{36.0}}$ & $\mathbf{\underline{34.5}}$ & $\mathbf{\underline{33.5}}$ & $\mathbf{\underline{32.9}}$ &  $\mathbf{32.0}$ \\
\end{tabular}
\end{table}
One can see that the method based on the Bures-Wasserstein mean performs well in all tested scenarios, and is among the two best-performing methods in almost all settings. In contrast, the other types of averages are less reliable, and may perform well in some cases, but poorly in others.
For instance, note that \emph{3sources} is the smallest dataset, and thus the $10$-nearest neighbor graphs are the most strongly connected. This makes the structural information less informative, and none of the power means performs well for every tested percentage of observed node labels.
The \emph{Wikipedia} dataset contains the largest number of labels, which makes the node classification task on this dataset the most challenging, and a graph mean that accurately represents the structural information from the different layers becomes crucial. In this case especially our proposed Bures-Wasserstein mean outperforms the power means.
Thus, incorporating our proposed framework in this state-of-the-art method for semi-supervised learning, we are able to improve on its performance.

\section{CONCLUSION AND OUTLOOK}
We introduced a novel framework for defining the mean of a set of graphs, which utilizes the connection between graphs and their smooth graph signal distributions as well as the Wasserstein metric.
By combining these two concepts, the recovered mean graph can preserve structural information of the input graphs that is captured by the respective smooth graph signals.
By extending previous results for Wasserstein barycenters of non-degenerate Gaussians to a class of degenerate Gaussians, we proved the existence and uniqueness of the mean graph, and provide an effective iterative algorithm for finding it.
We evaluated the proposed approach on various machine learning tasks, including k-means clustering of structured graphs, classification of functional brain networks, and semi-supervised node classification in multi-layer graphs.
Our experimental results show that our approach achieves consistent performance that is competitive with existing baseline approaches and can improve state-of-the-art methods. 

Our results indicate that the proposed Bures-Wasserstein framework for graphs serves as a powerful tool for practical machine learning applications. Since defining means is a key ingredient of statistical analysis our approach may lead to various novel methods for graph machine learning.
For example, the Bures-Wasserstein mean together with the corresponding Fr\'echet variance can be used to build generative models for graphs.
More broadly, we see this work as a first step towards a general and flexible statistical framework for graph data. We anticipate that existing methods from optimal transport theory may be extended to admit statistical inference, regression, and high order interpolation methods for populations of graphs \citep{chen2018measure, chen2018optimal, karimi2020statistical, lambert2022variational}. 

It should be noted that the methods presented in this work are currently limited to the setting of aligned graphs where a matching between the nodes in the given graphs is known. 
This is the case in various applications, e.g., multi-layer networks and brain networks estimated from EEG data as presented in this work.
However, in many other applications graphs are not aligned and may be of different sizes.
Another future direction of work is thus to develop computational methods that extend our approach to this different setting. \looseness=-1

\subsubsection*{Acknowledgements}
The authors thank the anonymous reviewers for useful comments. 
This work was supported by the Knut and Alice Wallenberg foundation under grant KAW 2021.0274.

\bibliography{references}


\onecolumn

\linewidth\hsize \toptitlebar
\begin{center}
  \Large \bfseries Bures Wasserstein Means of Graphs:
\end{center} 
\begin{center}
\Large \bfseries Supplementary Materials
\end{center}
\bottomtitlebar

\vspace{20pt}

In the following we provide supplementary material for several aspects of the proposed Bures-Wasserstein graph mean framework.
In 
Section~\ref{sec:assp} we discuss Assumption~\ref{ass:psd},
in Section~\ref{sec:supp_filters} we explain how to extend the proposed frameworks to take into account graph filters, and in Section~\ref{sec:supp_exp} we provide details on the numerical experiments.
Finally, Section~\ref{sec:supp_proofs} contains the proofs of the theoretical results in the paper.

\section{DISCUSSION OF ASSUMPTION~\ref{ass:psd}} \label{sec:assp}

In this Section we discuss the class of graphs we consider in this work, namely the class of signed weighted graphs with Laplacian matrix that is positive semi-definite with only one zero eigenvalue, 
as stated in Assumption~\ref{ass:psd}.

Signed graphs are a powerful object to describe complex systems with positive as well as negative interactions. Negative edge weights could for instance model antagonistic relationships in social networks, or model inhibition and repression of expressions in gene regulatory networks.
In fact, Assumption 3.1 is ubiquitous for social consensus networks, as it is a sufficient condition for the system to converge to a consensus \citep{olfati2007consensus}.

For signed weighted graphs, the spectral properties are more difficult to analyze than for graphs with positive edge weights.
For instance, note that for graphs with only positive edge weights the multiplicity of the zero eigenvalue equals the number of disjoint components of the graph \citep{fiedler1973algebraic}.
Thus, if a graph with positive edge weights has a Laplacian with only one zero eigenvalue then it is connected.
However, this property does not extend to signed graphs: a connected signed graph may have a Laplacian with multiple zero eigenvalues.
On the other hand, it was shown that any signed weighted graph with a Laplacian that has a single zero eigenvalue is connected \citep[Lemma~1]{chen2016characterizing}.

Since we model graphs through their smooth graph signal distribution, a negative edge weight between two nodes is associated with smooth signals that have high dissimilarity between these two nodes. As a result, a negative edge could appear in the mean graph if the corresponding nodes are far away from each other in all input graphs, because in this case there is strong evidence for graph signals to be very dissimilar on the two nodes. As an example, consider the setting in the right panel of Figure~\ref{fig:example}. Here we find the mean of a set of graphs that are very densely connected: Most pairs of nodes are connected in at least one of the input graphs. However, the node pair $(0,4)$ is not connected in any of the input graphs, thus relative to all other node pairs these two nodes are very far from each other in all given data samples. From the given data it is thus expected that these two nodes are dissimilar, which is described by a negative edge weight in the mean graph.

Finally, we note that in contrast to our work, the previous works \citet{petric2019got, petric2020wasserstein} defined a similar distance to the Bures-Wasserstein distance, but only for unsigned graphs. 
However, as observed in the example of the right panel of Figure~\ref{fig:example}, the Bures-Wasserstein mean may exhibit negative edge weights even if the input graphs have only positive edge weights\footnote{It is worth noting that this is the case for many other graph means based on the Laplacian matrix, for instance power means with negative power (e.g., harmonic mean), and geometric means.}.
In fact, the graphs with positive edge weights do not define a geodesically complete space with respect to the Bures-Wasserstein distance.
From this we conclude that Assumption~\ref{ass:psd} defines a more sensible class of graphs for the Bures-Wasserstein distance.

\section{DETAILS ON BURES-WASSERSTEIN FILTER GRAPH MEANS} \label{sec:supp_filters}

In this Section we review some background on the filtered graph optimal transport distance introduced in \citet{petric2021fgot}, and discuss its application to the graph mean problem \eqref{eq:graph_avg}.

\subsection{Graph filter distance}

Recall that the Bures-Wasserstein distance for graphs is inspired from graph signal processing, a field that aims at generalizing classical signal processing concepts to graphs.
Using these tools, we can incorporate signal processing techniques, like filters, into the Bures-Wasserstein distance.
A graph signal $x\in \mathbb{R}^N$ filtered by the filter $g:\mathbb{R}^{N\times N} \to \mathbb{R}^{N\times N}$, which is acting on the graph Laplacian $L$, is of the form $g(L)x$.
Analogously to classical filters, low-pass graph filters emphasise the low frequencies of the graph signals, which correspond to slow variations of the signal between strongly connected nodes \citep{ortega2018graph}.
On the other hand, high-pass graph filters emphasise the large frequencies, i.e., higher variations between connected nodes.

Given a white Gaussian noise signal on the graphs nodes, denoted $w\in \mathbb{R}^N$, the filtered signal  $g(L) w$ follows the normal distribution $ \mathcal{N}(0, g(L)^2)$, see \citet{petric2021fgot}.
A class of filtered graph distances can be defined by utilizing the filtered graph signals in Definition~\ref{def:dist} as follows.
\begin{definition} \label{def:filterdist}
    Let $G_0$ and $G_1$ be two graphs with signed weighted Laplacian matrices $L_0$ and $L_1$. Given a graph filter $g:\mathbb{R}^{N\times N} \to \mathbb{R}^{N\times N}$, the corresponding graph filter distance between the two graphs is defined as 
$d_{BW}^g(G_0,G_1)  := \mathcal{W}_2(\mu_{G_{0}^g}, \mu_{G_{1}^g})$,
where $\mu_{G_{j}^g} \sim \mathcal{N}(0, g(L_j)^2)$ for $j=0,1$.
\end{definition}

Hence, essentially different graph filter distances correspond to different embeddings of the graphs into the space of signal distributions.
Utilizing a low-pass filter in Definition~\ref{def:filterdist} results in a distance that captures mainly differences in the global graph behavior, and a high-pass filter measures predominantly local differences in the graphs, as observed in \citet{petric2021fgot}.
Finally, note that the filter $g(L)= L^{\dag/2}$ gives rise to the smooth graph signal distribution $ \mathcal{N}(0, L^\dag)$.
Thus, utilizing this filter in the Bures-Wasserstein filter distance in Definition~\ref{def:filterdist} results in the standard Bures-Wasserstein distance for graphs presented in Definition~\ref{def:dist}.

\subsection{Bures-Wasserstein filter graph mean}

We now consider using these filter graph distances in the Bures-Wasserstein mean problem \eqref{eq:graph_avg_BW}. By emphasising different types of spectral information of the input graphs, we expect to retrieve a mean graph that preserves the corresponding type of spectral properties of the input graphs.
We define the Bures-Wasserstein filter graph mean as the solution of
\begin{equation} \label{eq:graph_avg_BW_filtered}
\bar{G} = \argmin[G \in \mathcal{G}] \sum_{j=1}^m \lambda_j d_{BW}^g(G,G_j)^2  = \argmin[G \in \mathcal{G}] \sum_{j=1}^m \lambda_j \mathcal{W}_2(\mu_{G}^g, \mu_{G_j^g})^2.
\end{equation}
In order to recover a mean graph $G$ from the Wasserstein barycenter $\mu_G^g$, we require that there is a one-to-one mapping from the set of graphs in $\mathcal{G}$ to the set of signal distributions. In other words, the graph filter is required to be bijective on the set of Laplacian matrices that satisfy Assumption~\ref{ass:psd}.
With this assumption we can generalize the existence and uniqueness results from Section~\ref{sec:main}.
\begin{corollary}
Let $G_j$, $j=1,\dots,m$ be
graphs with signed weighted Laplacians $L_j$ that satisfy Assumption~\ref{ass:psd}, and let $g:\mathbb{R}^{N\times N} \to \mathbb{R}^{N\times N}$ be a bijective mapping on the set of matrices that satisfy Assumption~\ref{ass:psd}.
Then the Bures-Wasserstein filter mean graph \eqref{eq:graph_avg_BW_filtered} of these graphs has the signed weighted Laplacian matrix $L=g^{-1}(S^{1/2})$, where $S$ is the unique positive semi-definite symmetric solution to 
\begin{equation*} \label{eq:bary_eq_degen_filter}
S = \sum_{j=1}^m \lambda_j \left( S^{1/2} g(L_j)^2 S^{1/2} \right)^{1/2}
\end{equation*}
that satisfies $\text{range}(S)=\mathcal{R} = (\text{span}\{\mathbf{1}_N\})^\perp$. 
\end{corollary}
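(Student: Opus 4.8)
The plan is to reduce the Corollary to Theorem~\ref{thm:bary_unique} by a change of variables, exactly as the standard Bures-Wasserstein theory is set up in Section~\ref{sec:main}. First I would unwind the definitions: by Definition~\ref{def:filterdist}, the filtered embedding of $G_j$ is the (possibly degenerate) Gaussian $\mu_{G_j}^g \sim \mathcal{N}(0, g(L_j)^2)$, so the filter mean problem \eqref{eq:graph_avg_BW_filtered} is literally the Wasserstein barycenter problem \eqref{eq:barycenter} for the measures $\mathcal{N}(0, g(L_j)^2)$, with the feasible set restricted to those $\mu_G^g \sim \mathcal{N}(0, g(L)^2)$ arising from a graph $G \in \mathcal{G}$.

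Next I would check that Assumption~\ref{ass:psd}, together with the hypothesis that $g$ is bijective on the set of matrices satisfying Assumption~\ref{ass:psd}, forces the covariance matrices $\Sigma_j := g(L_j)^2$ to have the same range structure as the $L_j^\dag$ appearing in Theorem~\ref{thm:bary_unique}. Here I would use that a graph filter acts spectrally: $g(L_j)$ shares the eigenvectors of $L_j$, so $g(L_j)$ — and hence $g(L_j)^2$ — has the null eigenvector $\mathbf{1}_N$ and is positive definite on $\mathcal{R} = (\operatorname{span}\{\mathbf 1_N\})^\perp$. Thus the degeneracy of every $\Sigma_j$ lies in the same one-dimensional subspace, which is precisely the structural hypothesis that makes the proof of Theorem~\ref{thm:bary_unique} go through. (This is the point where the ``bijective on matrices satisfying Assumption~\ref{ass:psd}'' hypothesis is essential: it guarantees both that $g(L_j)^2$ again has range exactly $\mathcal{R}$, and that from the barycenter covariance we can recover a unique graph Laplacian.) I would then invoke Theorem~\ref{thm:bary_unique} verbatim, but with $L_j^\dag$ replaced by $g(L_j)^2$: the barycenter exists, is the zero-mean Gaussian $\bar\mu \sim \mathcal{N}(0, S)$, and $S$ is the unique PSD symmetric solution of
\begin{equation*}
S = \sum_{j=1}^m \lambda_j \left( S^{1/2} g(L_j)^2 S^{1/2} \right)^{1/2}
\end{equation*}
with $\operatorname{range}(S) = \mathcal{R}$.

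Finally I would perform the recovery step: the barycenter is the filtered embedding of the mean graph, i.e. $\bar\mu = \mu_G^g \sim \mathcal{N}(0, g(L)^2)$, so $g(L)^2 = S$, hence $g(L) = S^{1/2}$ (taking the unique PSD square root, which is legitimate since $g(L)$ is itself PSD on $\mathcal{R}$ and zero on $\mathbf{1}_N$ — I would note $S^{1/2}$ has range $\mathcal{R}$ too), and therefore $L = g^{-1}(S^{1/2})$, where $g^{-1}$ is well-defined on the relevant set of matrices by the bijectivity hypothesis. Uniqueness of $L$ follows from uniqueness of $S$ and injectivity of $g$. The main obstacle — really the only non-bookkeeping point — is justifying that one may apply the machinery of Theorem~\ref{thm:bary_unique} to the matrices $g(L_j)^2$: one must verify that $g(L_j)^2$ genuinely inherits Assumption~\ref{ass:psd}-type structure (single zero eigenvalue, eigenvector $\mathbf 1_N$, positive definite on $\mathcal{R}$), which requires knowing that a graph filter preserves the eigenspace decomposition; a degenerate or spectrally non-trivial $g$ could in principle collapse or expand the null space, and the hypothesis that $g$ is bijective on the Assumption~\ref{ass:psd} set is exactly what rules this out. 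Once that is in place, the rest is a transcription of Theorem~\ref{thm:bary_unique} plus the one-line inversion $L = g^{-1}(S^{1/2})$.
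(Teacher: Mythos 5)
Your proposal is correct and follows essentially the same route as the paper, which proves the corollary simply by reducing to Theorem~\ref{thm:bary_unique} (with $L_j^\dag$ replaced by $g(L_j)^2$) and then inverting the bijective filter to recover $L = g^{-1}(S^{1/2})$. Your additional care in verifying that $g(L_j)^2$ retains range $\mathcal{R}$ is a worthwhile elaboration of a point the paper leaves implicit, but it does not constitute a different argument.
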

\begin{proof}
    The result follows directly from Theorem~\ref{thm:bary_unique}, since the filter $g$ is bijective.
\end{proof}

\begin{corollary}
Let $G_0$ and $G_1$ be two graphs with signed weighted Laplacians $L_0$ and $L_1$ that satisfy Assumption~\ref{ass:psd}, and let $g:\mathbb{R}^{N\times N} \to \mathbb{R}^{N\times N}$ be a bijective mapping on the set of matrices that satisfy Assumption~\ref{ass:psd}.
Then the Bures-Wasserstein filter mean of these two graphs with weights $1-t$ and $t$ for $t\in(0,1)$ has the signed weighted graph Laplacian $L_t = g^{-1}(S_t^{1/2})$, where 
\begin{equation*} \label{eq:interpolation}
S_t = g(L_0)^\dag \left( (1-t) g(L_0)^2 +t \left( g(L_0) g(L_1)^2 g(L_0) \right)^{1/2} \right)^2 g(L_0)^\dag.
\end{equation*}    
\end{corollary}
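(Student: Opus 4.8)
The plan is to reduce the claim to Theorem~\ref{thm:geodesic} by the change of variables supplied by the filter $g$. First I would observe that, by Definition~\ref{def:filterdist}, the filter mean problem \eqref{eq:graph_avg_BW_filtered} with $m=2$ and weights $1-t$ and $t$ is an instance of the Wasserstein barycenter problem \eqref{eq:barycenter} for the two embedding Gaussians $\mathcal{N}(0,g(L_0)^2)$ and $\mathcal{N}(0,g(L_1)^2)$, with the feasible set restricted to $\{\, \mathcal{N}(0,g(L)^2) : L \text{ satisfies Assumption~\ref{ass:psd}} \,\}$; and a two-measure barycenter is exactly the McCann displacement interpolant, which has a closed form once the covariances are identified. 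The key structural point is that since $g$ is a bijection of the set of matrices obeying Assumption~\ref{ass:psd} onto itself, each $g(L_j)$ is positive semi-definite with a single zero eigenvalue, whose eigenvector is $\mathbf{1}_N$ (every signed graph Laplacian annihilates $\mathbf{1}_N$), so each embedding covariance $\Sigma_j := g(L_j)^2$ has range exactly $\mathcal{R} = (\mathrm{span}\{\mathbf{1}_N\})^\perp$. Thus all the given degenerate Gaussians are supported on the common subspace $\mathcal{R}$.

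Next I would invoke the same device used in the proof of Theorem~\ref{thm:geodesic}: restricting all the operators to $\mathcal{R}$ makes the Gaussians nondegenerate there, where the classical closed-form displacement interpolation applies and shows that the barycenter is $\mathcal{N}(0,S_t)$ with
\begin{equation*}
S_t = \Sigma_0^{\dag/2} \left( (1-t)\Sigma_0 + t \left( \Sigma_0^{1/2} \Sigma_1 \Sigma_0^{1/2} \right)^{1/2} \right)^2 \Sigma_0^{\dag/2},
\end{equation*}
and $\mathrm{range}(S_t) = \mathcal{R}$, because conjugation by $\Sigma_0^{\dag/2}$ preserves $\mathcal{R}$ and the bracketed factor is invertible on $\mathcal{R}$. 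Substituting $\Sigma_j = g(L_j)^2$, and hence $\Sigma_0^{1/2} = g(L_0)$ and $\Sigma_0^{\dag/2} = g(L_0)^\dag$ (both $g(L_j)$ being positive semi-definite), turns this into precisely the expression stated in the corollary.

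Finally, to recover a mean graph from the barycentric distribution, I would note that $S_t^{1/2}$ is positive semi-definite with unique null vector $\mathbf{1}_N$ and therefore satisfies Assumption~\ref{ass:psd}; since $g$ is bijective on that set there is a unique Laplacian $L_t$ with $g(L_t) = S_t^{1/2}$, namely $L_t = g^{-1}(S_t^{1/2})$, and then $\mathcal{N}(0,g(L_t)^2) = \mathcal{N}(0,S_t)$ is the barycenter, which in particular lies in the restricted feasible set, so the graph $G_t$ with Laplacian $L_t$ is the sought Bures-Wasserstein filter mean. Uniqueness is inherited from Theorem~\ref{thm:geodesic} (equivalently Theorem~\ref{thm:bary_unique}) together with the injectivity of $g$. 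I expect the only genuine subtlety to be the degeneracy bookkeeping of the first step — checking that $g$ keeps the common null direction $\mathbf{1}_N$ fixed so that every embedding covariance has range $\mathcal{R}$; once that is in place, everything else is a mechanical substitution into the already-proven Theorem~\ref{thm:geodesic}.
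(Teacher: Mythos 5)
Your proposal is correct and follows essentially the same route as the paper, whose entire proof is the one-liner ``the result follows directly from Theorem~\ref{thm:geodesic}, since the filter $g$ is bijective'': you simply make explicit the substitution $\Sigma_j = g(L_j)^2$ (so $\Sigma_0^{1/2}=g(L_0)$, $\Sigma_0^{\dag/2}=g(L_0)^\dag$) into the displacement-interpolation formula underlying Theorem~\ref{thm:geodesic}, together with the degeneracy bookkeeping (common range $\mathcal{R}$) and the recovery $L_t=g^{-1}(S_t^{1/2})$ via bijectivity. The details you supply are exactly the ones the paper leaves implicit, and they check out.
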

\begin{proof}
   The result follows directly from Theorem~\ref{thm:geodesic}, since the filter $g$ is bijective. 
\end{proof}

Finally, we can generalize Algorithm~\ref{alg:bary} to taking into account graph filters. The resulting method is presented in Algorithm~\ref{alg:bary_filter}. 
\begin{algorithm}[tb]
\begin{algorithmic}
\State {\bf Given:} Matrices $L_1,\dots,L_m \in \mathbb{R}^{N\times N}$ satisfying Assumption~\ref{ass:psd}, and initial SPD matrix $S\in \mathbb{R}^{N\times N}$
\State $\displaystyle{\Sigma_j \gets g \left(L_j  + \frac{1}{N} \mathbf{1}_{N\times N} \right)^2}$ for $j=1,\dots,m$
\While{Not converged}
\State $\displaystyle{ S \gets S^{-1/2} \left( \sum_{j=1}^m \lambda_j ( S^{1/2} \Sigma_j S^{1/2} )^{1/2} \right)^2 S^{-1/2},}$ 
\EndWhile
\State {\bf Return:} $\displaystyle{L \gets g^{-1}(S^{1/2}) - \frac{1}{N} \mathbf{1}_{N\times N} }$
\end{algorithmic}
\caption{Bures-Wasserstein filter mean of graphs.} \label{alg:bary_filter}
\end{algorithm}

\subsection{Experimental results} \label{subsec:filt_exp}

We now illustrate the behavior of the Bures-Wasserstein filter mean graphs with respect to several filters.
In particular, we study the graph filters summarized in the following table.
\begin{center}
    \begin{tabular}{l | l}
   $g(L)= L^\dag$ & most low-pass \\
   $g(L)=L^{\dag/2}$ & standard Bures-Wasserstein \\
   $g(L)=L^{1/2}$ &  \\
   $g(L)=L$ & most high-pass\\
\end{tabular}
\end{center}

We present the results of some of the experiments considered in Section~\ref{sec:exp} utilizing these filters in the Bures-Wasserstein filter graph mean problem \eqref{eq:bary_eq_degen_filter}.
The respective optimization problems are solved using Algorithm~\ref{alg:bary_filter}.

\subsubsection{Graph fusion}
First, we consider the graph fusion experiment that is set up as described in Section~\ref{subsec:fusion}.
Table~\ref{tab:graph_fusion_filters} shows how the graph metrics are preserved for different types of Bures-Wasserstein means.
\begin{table}
\caption{Difference of the perturbed graphs' mean to the original graph with respect to several metrics. Compare Table~\ref{tab:graph_fusion}.} \label{tab:graph_fusion_filters}
\begin{tabular}{ l | c c c c c}
 & Laplacian & Covariance & degree centrality & modularity & participation coeff. \\
\midrule
Arithmetic &  $1.85 \pm 0.10$ & $0.26 \pm 0.18$ & $\mathbf{0.026 \pm 0.003}$ & $0.015 \pm 0.003$ & $0.37 \pm 0.13$ \\
Harmonic &  $2.87 \pm 0.19$ & $\mathbf{0.13 \pm 0.10}$ & $0.047 \pm 0.004$ & $0.012 \pm 0.002$ & $0.26 \pm 0.09$ \\
Geometric &  $7.25 \pm 0.36$ & $0.38 \pm 0.17$ & $0.103\pm 0.010$ & $0.015 \pm 0.009$ & $0.40 \pm 0.12$ \\ \midrule
$g(L) = L^\dag$ & $2.39 \pm 0.13$ & $ \mathbf{0.13 \pm 0.08 }$ & $ 0.039 \pm 0.003 $ & $ \mathbf{0.009 \pm 0.002}$ & $\mathbf{0.22 \pm 0.08}$ \\
$g(L) = L^{\dag/2}$ &  $2.26 \pm 0.11$ & $0.15 \pm 0.12$ & $0.036 \pm 0.002$ & $\mathbf{0.011 \pm 0.003}$ & $\mathbf{0.25 \pm 0.09}$ \\
$g(L)=L^{1/2}$ & $\mathbf{1.81 \pm 0.10}$ & $ 0.21 \pm 0.13$ & $\mathbf{0.026 \pm 0.003}$ & $0.012 \pm 0.002$ & $0.30 \pm 0.11$ \\
$g(L)= L$ & $\mathbf{1.68 \pm 0.10}$ & $ 0.23 \pm 0.14$ & $\mathbf{0.025 \pm 0.003}$ & $0.012 \pm 0.002$ & $0.30 \pm 0.12$ \\
\end{tabular}
\end{table}
As expected, the low-pass filters preserve mostly global graph metrics, such as modularity, and the covariance matrix, whereas the high-pass filters preserve local behavior, such as degree centrality and the Laplacian matrix.
These findings highlight the flexibility and versatility of our proposed graph filter means, providing practitioners with an easy-to-use framework that can be readily adapted to different applications, depending on the relative importance of local versus global graph behavior.

\subsubsection{Semi-supervised learning in multi-layer graphs} \label{subsec:nodeclass_filters}

Next, we repeat the node-classification experiments in Section~\ref{subsec:node_class}, and include also the low-pass filter $g(L)=L^\dag$ and the high-pass filter $g(L)=L^{1/2}$.
The results are presented in Figure~\ref{fig:nodeclass}.
 \begin{figure}[tb]
 \centering
  \includegraphics[trim={0 0 0 0}, clip, width=0.49\textwidth]{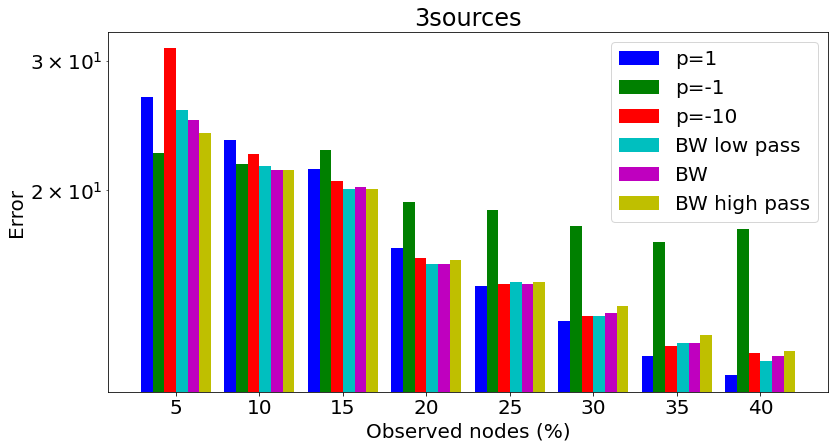} \hfill
\includegraphics[trim={0 0 0 0}, clip, width=0.49\textwidth]{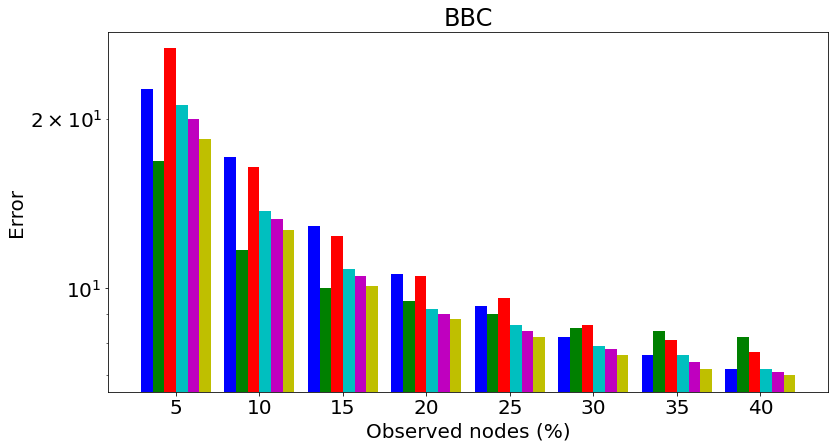}\\
\includegraphics[trim={0 0 0 0}, clip, width=0.49\textwidth]{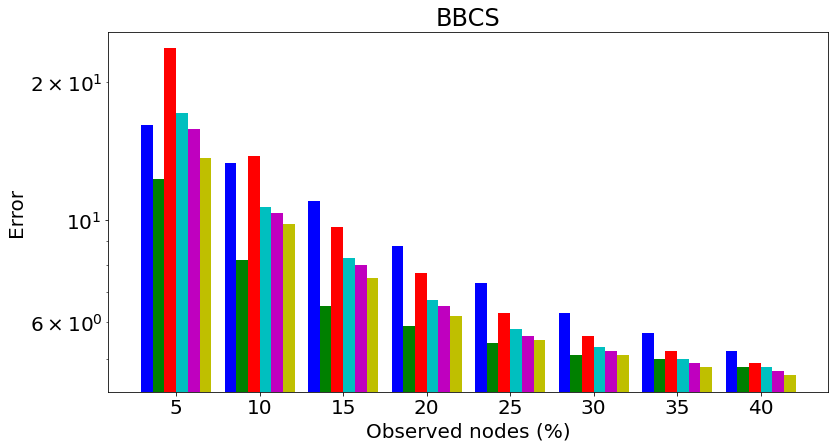} \hfill
\includegraphics[trim={0 0 0 0}, clip, width=0.49\textwidth]{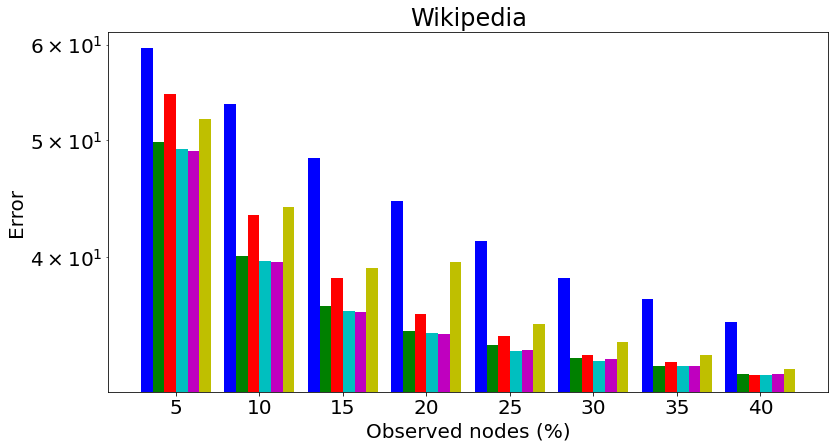}
\caption{Test error of node classification for different percentage of observed node labels using several types of graph means and distances. Here, \emph{BW low pass} and \emph{BW high pass} correspond to the filter distance with $g(L)= L^\dag$ and $g(L)= L^{1/2}$, respectively. }\label{fig:nodeclass}
\end{figure}
Overall, we observe that the tested graph filters exhibit strong performance in most experiments.
For a detailed comparison of the datasets, we refer to Section~\ref{subsec:app_nodeclass}.
As a general trend, the high-pass filter mean tends to perform very well, particularly on the more homogeneous datasets such as \emph{BBC} and \emph{BBCS}.
However, in the more challenging datasets \emph{3sources} and \emph{Wikipedia} the low-pass filter mean may outperform the high-pass filter mean.
Note that all tested graph means have the highest error rates for the \emph{Wikipedia} dataset.
Especially for this challenging dataset the more low-pass filter means demonstrate significant improvements over the high-pass filter mean.

As already observed in Section~\ref{subsec:node_class}, none of the power means performs well in all scenarios, which is in contrast to the Bures-Wasserstein means.
For instance, the harmonic mean (i.e., the power mean with $p=-1$) performs very poorly on the \emph{3sources} dataset when more than $25\%$ of the nodes are observed, and the arithmetic mean (i.e., the power mean with $p=1$) does not yield satisfactory results on the \emph{Wikipedia} dataset.
These observations lead us to conclude that the Bures-Wasserstein filter means offer a reliable family of graph means, capable of delivering consistent performance across various datasets.

\section{DETAILS ON EXPERIMENTS} \label{sec:supp_exp}

In this Section we present details on the numerical experiments in Section~\ref{sec:exp}.
Specifically, in the following subsections we describe the baseline methods, discuss computational aspects of our proposed Algorithm~\ref{alg:bary}, and we provide more information on the data for the k-means clustering example in Section~\ref{subsec:kmeans}, and the semi-supervised node-classification problem in \ref{subsec:node_class}.

\subsection{Baseline methods}

In this section we briefly describe the baseline methods considered in Figure~\ref{fig:example} and Section~\ref{sec:exp}.

Various means for scalars can be generalized to symmetric positive definite matrices \citep{lim2012matrix}, and have been utilized for graph Laplacian matrices \citep{mercado2019generalized, mercado2019spectral, el2020orthonet}.
These means parameterize the class of graphs $\mathcal{G}$ in the graph mean problem \eqref{eq:graph_avg} by the graph Laplacian matrices of size $\mathbb{R}^{N \times N}$. 
For instance, when using the Frobenius norm as the distance $d(\cdot,\cdot)$ in \eqref{eq:graph_avg}, the mean of a set of graph Laplacians $L_1,\dots,L_m$ is their arithmetic mean $\sum_{j=1}^m \lambda_j L_j$.
Similarly, the harmonic mean is defined as $ \left( \sum_{j=1}^m \lambda_j L_j^{\dag} \right)^{\dag}$. 
These two means are special cases of the so-called power means introduced in Section~\ref{subsec:node_class}.

The generalization of the geometric mean from scalars to symmetric positive definite matrices is less straight-forward, and cannot be expressed in closed form \citep{lim2012matrix}. 
In this work we compute the geometric mean iteratively as described in \citet[Section~IV.C]{el2020orthonet}.

In Sections~\ref{subsec:kmeans} and \ref{subsec:brain} we evaluate also a topological clustering method that compares persistence barcodes of the graphs using the Wasserstein distance \citet{songdechakraiwut2021fast}.
The retrieved mean thus compares topological features of the input graphs, but does not preserve the node alignment. In order to address this the authors propose a hybrid method of topological and arithmetic mean, which preserves node correspondences. In this work, we utilise a weighting factor of $\lambda = \frac{1}{2}$ for the topological and arithmetic component of this hybrid mean.

\subsection{Computational aspects}

In this section, we examine the behavior of our proposed computational method outlined in Algorithm~\ref{alg:bary}. As highlighted in Section~\ref{sec:main}, the fixed-point iteration employed in Algorithm~\ref{alg:bary} is based on a similar iteration used for the Wasserstein barycenter problem \eqref{eq:barycenter} with normal distributions as input \citep{alvarez2016fixed}. Consequently, this fixed-point iteration converges towards the unique solution of the matrix equation \eqref{eq:bary_eq_degen}, which characterizes the Bures-Wasserstein mean graph.
Let $S^{(n)} \in \mathbb{R}^{N\times N}$ denote the outcome of the fixed-point iteration after $n$ iterations. In our experiments, we adopt the stopping criterion $|S^{(n)} - S^{(n-1)}|<10^{-5}$, which is typically achieved within a few iterations ($n\le 5$). However, it is important to note that to our knowledge there is currently no theoretical analysis available regarding the convergence rate.

We observe that each iteration of the fixed-point method in Algorithm~\ref{alg:bary} involves computing one matrix inverse and performing $m+1$ square roots of symmetric positive definite matrices. As a result, the computational complexity of one fixed-point iteration in the algorithm is $\mathcal{O}(mN^3)$. It is worth mentioning that the factor $m$ can be effectively addressed by parallelizing the summation in Algorithm~\ref{alg:bary}.
The cubic dependence in $N$ may be addressed by using approximations of the matrix inverse, and matrix roots, or by exploiting graph sparsity.
Investigating ways to enhance the computational efficiency of Algorithm~\ref{alg:bary} is an important direction for future work.

\begin{wrapfigure}{r}{0.43\textwidth}
\vspace{-10pt}
\centering
\begin{minipage}{.4\textwidth}
\begin{framed}
    \centering
{\bf{Data samples}}

\vspace{5pt}

\begin{minipage}{.2\textwidth}

Class 1\\

\vspace{10pt}
Class 2\\

\vspace{10pt}
Class 3\\

\vspace{10pt}
Class 4\\

\vspace{10pt}
Class 5 

\end{minipage}
\begin{minipage}{.75\textwidth}

    \includegraphics[width=\textwidth]{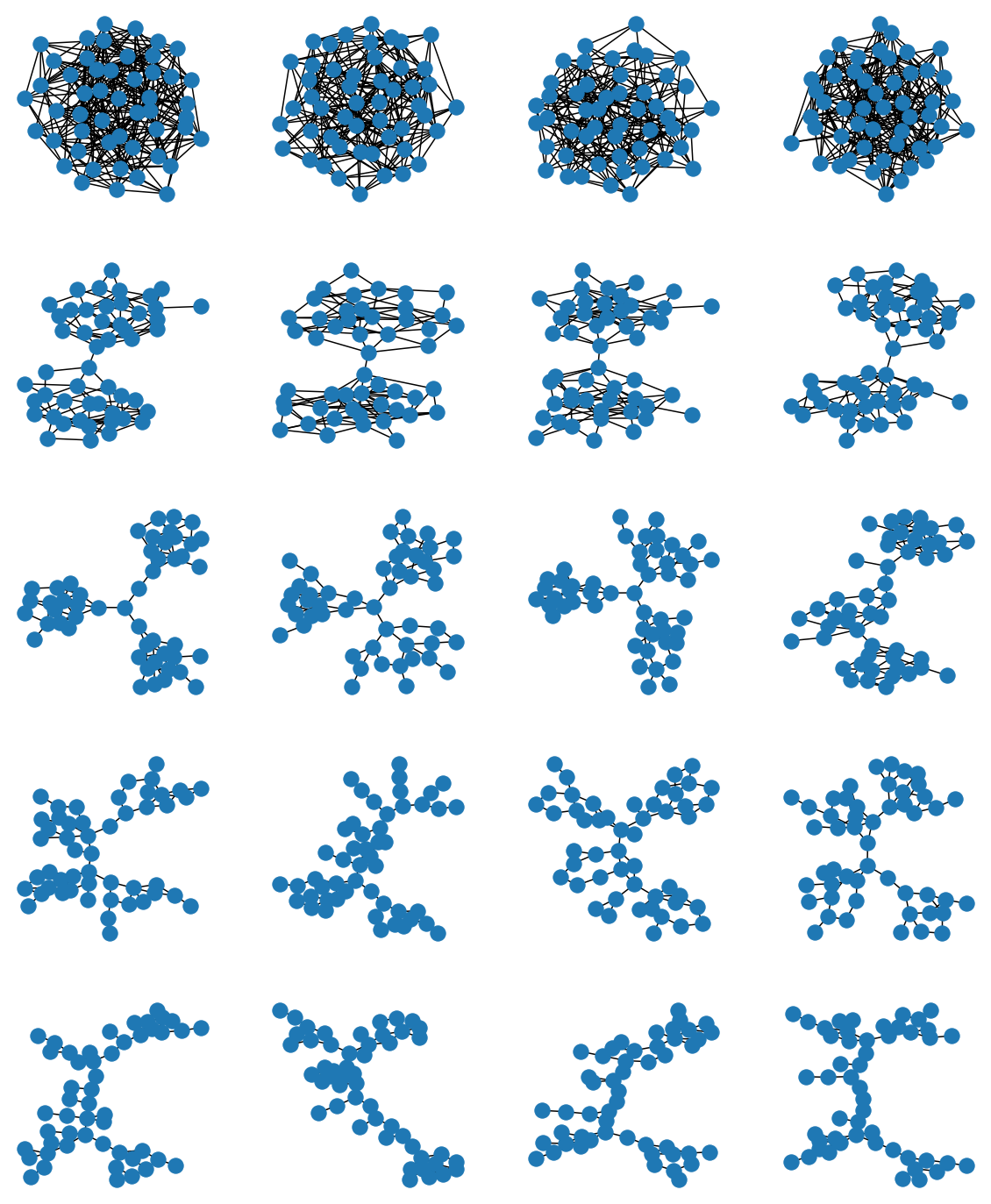}

  \end{minipage}
    \end{framed}     
\end{minipage}
\caption{Some data samples of the k-means clustering example in Section~\ref{subsec:kmeans} with $k=N_C=5$ clusters and inter-community edge probability $p=0.2$. } \label{fig:kmeans_samples}
\end{wrapfigure}

\subsection{Details on k-means clustering in Section~\ref{subsec:kmeans}}

The task in the k-means clustering example in Section~\ref{subsec:kmeans} is to classify structured graphs with different numbers of communities.
In Figure~\ref{fig:kmeans_samples} we show some graph samples from the dataset.
More precisely, we consider the setting with $N_C=5$ classes, and showcase four samples from each of these classes of graphs.
Here, different classes are defined by the number of communities, and communities are arranged on a line. There are two different connectivity patterns between neighbouring communities, which are each picked with probability $1/2$.
Each of these two patterns constructs one edge between the communities, between a different set of nodes.

\begin{figure}
\centering
\begin{minipage}{.7\textwidth}
\begin{framed}
    \centering
{\bf{Centroids}}
\vspace{5pt}

    \begin{minipage}{0.02\textwidth}
\rotatebox[origin=c]{90}{Iteration 2 \ $\longleftarrow$ \ Iteration 1 \ $\longleftarrow$ \ Iteration 0}
\end{minipage}
\hfill
\begin{minipage}{0.95\textwidth}

    \includegraphics[width=\textwidth]{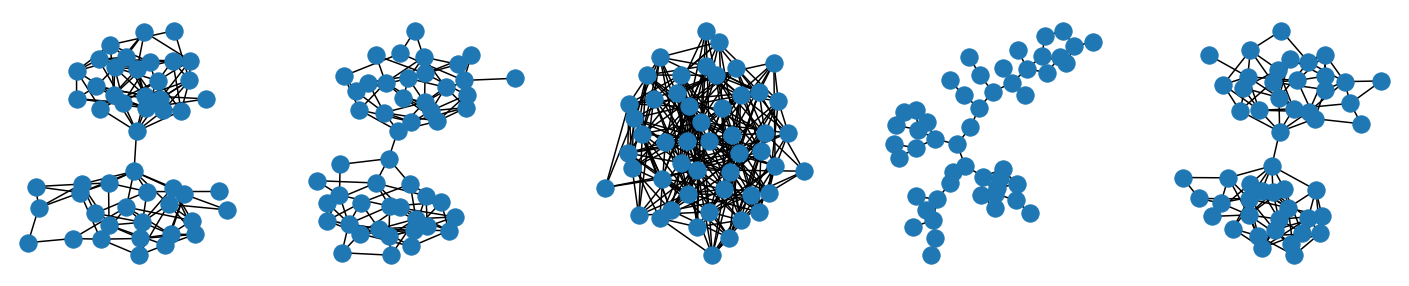}

    \vspace{6pt}

    \includegraphics[width=\textwidth]{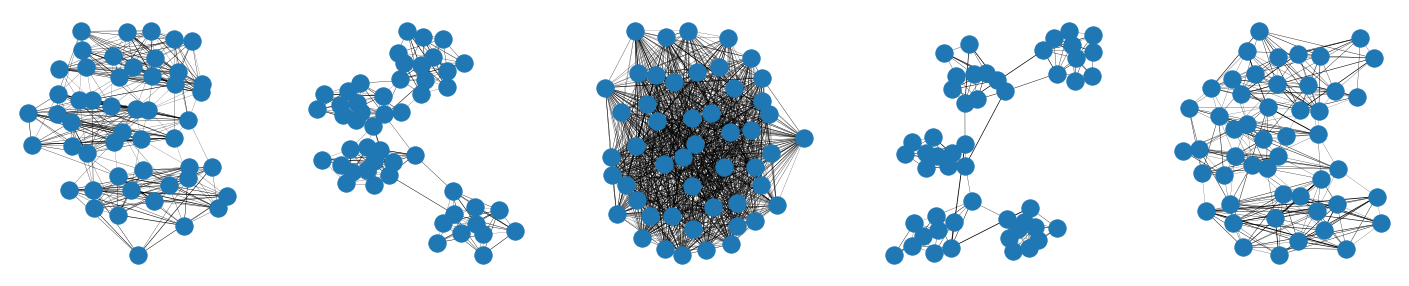}

    \vspace{6pt}

    \includegraphics[width=\textwidth]{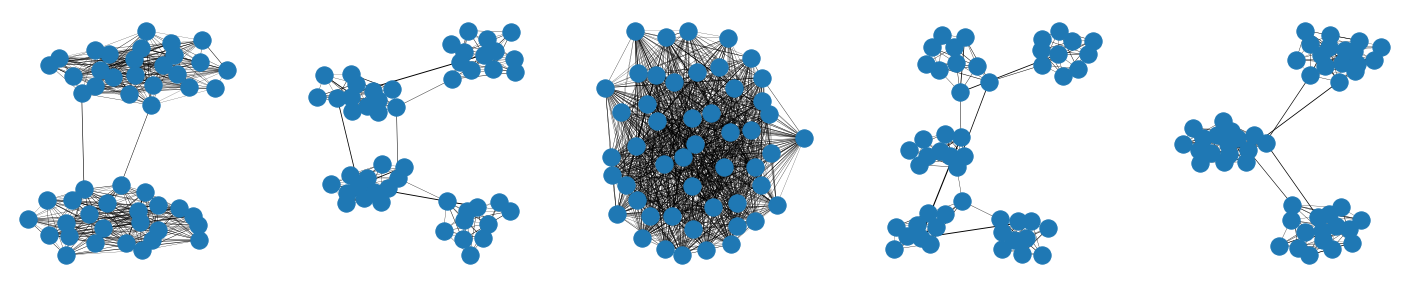}

\end{minipage}

\vspace{6pt}

 \hspace{8pt} Cluster 1 \hspace{17pt} Cluster 2 \hspace{17pt} Cluster 3  \hspace{17pt} Cluster 4 \hspace{17pt} Cluster 5
    \end{framed}
\end{minipage}
    \caption{Evolution of the centroids for one trial of the k-means clustering example with data as illustrated in Figure~\ref{fig:kmeans_samples}.}
    \label{fig:kmeans_centroids}
\end{figure}

Figure~\ref{fig:kmeans_centroids} illustrates the evolution of centroids computed by the k-means classifier visually.
The method is initialized with five graphs picked randomly from the dataset. In this example the k-means clustering method converges in two iterations.
We note here that the five final centroids are graphs, where the number of communities ranges from one to five. Moreover, we can see that the two distinct connectivity patterns between neighboring communities are both captured by the centroids. 
We thus conclude that the final centroids provide meaningful representations of the graphs in each of the five classes.

\subsection{Details on semi-supervised learning in multi-layer graphs in Section~\ref{subsec:node_class}} \label{subsec:app_nodeclass}

For the semi-supervised node-classification problem in Section~\ref{subsec:node_class}, we use four commonly used datasets: \emph{3sources}, \emph{BBC}, \emph{BBCS}, and \emph{Wikipedia}.
Here, layers correspond to different features of the data, and for each layer a k-nearest neighbour graph is constructed based on the Pearson linear correlation between nodes, as described in \citet{mercado2019generalized}.
Thus, nodes with high correlation are close to each other in the resulting graphs.
Table~\ref{tab:summary_multilayer} summarizes some characteristics of the resulting multi-layer graphs.
\begin{table} \caption{Characteristics of the multi-layer graphs used in the semi-supervised node classification experiments in Section~\ref{subsec:node_class}.} \label{tab:summary_multilayer}
\centering
\begin{tabular}{l | c | c | c | c | c | c  }
 & Layers & Nodes   & Labels & Edges & \multicolumn{2}{c}{Mean degree centrality } \\
  & & &  & layers & average & layers \\
 \midrule
3sources &  $3$ & $169$ & $6$ & $1084$, $1139$, $1188$ & $13.4$ & $12.8$, $13.5$, $14.1$ \\
BBC  & $4$ & $685$ & $5$ & $4806$, $4775$, $4816$, $4874$ & $14.1$ & $14.0$, $14.0$, $14.1$, $14.2$ \\
BBCS   & $2$ & $544$  & $5$ & $3803$, $3855$ & $14.1$ & $14.0$, $14.17$\\
Wikipedia  & $2$ & $693$  & $10$ & $5260$, $5038$ & $14.9$ & $15.2$, $14.5$ \\
\end{tabular} \vspace{5pt}
\end{table}
We observe that the datasets \emph{BBC} and \emph{BBCS} exhibit a higher level of homogeneity between the layers in terms of the number of edges and average degree centrality.
This finding explains the consistent and relatively similar performance of all tested methods on these datasets, as discussed in Section~\ref{subsec:node_class}.
However, the more complex datasets, such as \emph{3sources} and \emph{Wikipedia}, exhibit greater heterogeneity between the layers, leading to more pronounced differences in the performance of the tested methods.
Moreover, these datasets present additional challenges.
For instance, in the \emph{3sources} dataset, the graphs have the smallest number of nodes, but have similar degree centrality, and thus their nodes are most strongly connected,
resulting in less informative structural information.
On the other hand, the \emph{Wikipedia} dataset involves 10 different types of node labels, making the classification task particularly challenging.
Remarkably, as demonstrated in Sections~\ref{subsec:node_class} and \ref{subsec:nodeclass_filters}, our proposed Bures-Wasserstein mean consistently outperforms state-of-the-art methods, particularly on these challenging datasets.

\section{PROOFS} \label{sec:supp_proofs}

In this Section we present the proofs of the main results in Section~\ref{sec:main}.

\subsection{Proof of Theorem~\ref{thm:bary_unique}} 

We now present a proof of the the main result of the paper, which asserts the existence and uniqueness of a solution to the Bures-Wasserstein mean problem for graphs.
To prove the result, we first show that the involved optimal transport problems on $\mathbb{R}^N$ can be projected into the space $\mathcal{R} = (\text{span}\{\mathbf{1}_n \})^\perp$ without changing the optimal solution.

\begin{lemma} \label{lem:ot_proj}
 Let $\mu_0,\mu_1$ be two probability measures with support on a $d$-dimensional subspace $ \mathcal{R} \subset \mathbb{R}^N$, and let $P:\mathbb{R}^N \to \mathbb{R}^d $ be a mapping that is distance preserving between $ \mathcal{R}$ and $\mathbb{R}^d$. 
Denote $\mathcal{W}_2^{\mathcal{X}}(\cdot,\cdot)$ the Wasserstein distance in space $\mathcal{X}$.
Then it holds that
\begin{equation} \label{eq:prop_subspace}
 \mathcal{W}_2^{\mathbb{R}^N}(\mu_0,\mu_1) = \mathcal{W}_2^{\mathbb{R}^d}( P_\# \mu_0, P_{\#} \mu_1).
\end{equation}
\end{lemma}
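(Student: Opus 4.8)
\textbf{Proof plan for Lemma~\ref{lem:ot_proj}.}

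The plan is to exploit the Kantorovich (coupling) formulation of the Wasserstein distance, since the map $P$ need not be a bijection of $\mathbb{R}^N$ and hence the Monge formulation is awkward to push around directly. First I would observe that, because $\mu_0$ and $\mu_1$ are supported on $\mathcal{R}$, every transport plan (coupling) $\pi \in \Pi(\mu_0,\mu_1)$ on $\mathbb{R}^N \times \mathbb{R}^N$ is concentrated on $\mathcal{R}\times\mathcal{R}$. Restricting the isometry $P$ to $\mathcal{R}$ gives a bijection $P|_{\mathcal{R}}:\mathcal{R}\to\mathbb{R}^d$ with inverse $Q:=(P|_{\mathcal{R}})^{-1}$, and the product map $(P,P)$ is then a measurable bijection $\mathcal{R}\times\mathcal{R}\to\mathbb{R}^d\times\mathbb{R}^d$. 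Pushing couplings forward and back along $(P,P)$, resp. $(Q,Q)$, sets up a bijection between $\Pi(\mu_0,\mu_1)$ (plans concentrated on $\mathcal{R}^2$) and $\Pi(P_\#\mu_0,P_\#\mu_1)$: one checks the marginals are preserved because $P_\#(\text{proj}_i)_\#\pi = (\text{proj}_i)_\#(P,P)_\#\pi$.

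Next I would verify that this bijection preserves the transport cost. For $x,y\in\mathcal{R}$ we have $\|x-y\|_2 = \|P x - P y\|_2$ since $P$ is distance preserving on $\mathcal{R}$; integrating against a plan $\pi$ concentrated on $\mathcal{R}^2$ gives
\[
\int_{\mathbb{R}^N\times\mathbb{R}^N}\|x-y\|_2^2\,d\pi(x,y) \;=\; \int_{\mathbb{R}^d\times\mathbb{R}^d}\|u-v\|_2^2\,d\big((P,P)_\#\pi\big)(u,v).
\]
Taking the infimum over the two (bijectively matched) sets of couplings yields \eqref{eq:prop_subspace}, provided the infimum is attained or at least finite. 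The hypothesis that $\mathcal{W}_2^{\mathbb{R}^N}(\mu_0,\mu_1)$ exists — i.e. there is a feasible transport map, hence a fortiori a feasible coupling with finite cost — guarantees the infimum on the left is a genuine real number, and the cost equality then transfers finiteness to the right-hand side; the reverse feasibility direction is immediate since $(Q,Q)$ maps any finite-cost plan on $\mathbb{R}^d$ back to one on $\mathbb{R}^N$.

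The main obstacle I anticipate is purely a matter of careful bookkeeping rather than deep content: one must be slightly careful that the Monge problem \eqref{eq:W2squared} as literally stated uses push-forward \emph{maps}, not couplings, and that a map between degenerate measures supported on $\mathcal{R}$ corresponds to a map on $\mathbb{R}^d$ after conjugating by $P$ and $Q$. So I would either (i) phrase everything through Kantorovich plans and invoke the standard equivalence of the Monge and Kantorovich values in this setting (both measures have finite second moment once a feasible plan exists), or (ii) directly note that if $T:\mathbb{R}^N\to\mathbb{R}^N$ pushes $\mu_0$ to $\mu_1$ then, since $\mu_0$ gives full mass to $\mathcal{R}$ and $\mu_1$ to $\mathcal{R}$, the map $T$ may be replaced by its "$\mathcal{R}$-component" $\Pi_{\mathcal{R}}\circ T$ without increasing the cost, and then $\widetilde T := P\circ(\Pi_{\mathcal{R}}\circ T)\circ Q$ pushes $P_\#\mu_0$ to $P_\#\mu_1$ with the same cost, and conversely. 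Either route closes the argument; I would present route (i) for brevity, remarking that distance preservation of $P$ on $\mathcal{R}$ is exactly what makes the cost integrand invariant.
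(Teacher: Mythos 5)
Your proposal is correct in substance, and your ``route (ii)'' is essentially the paper's own argument: the paper works entirely in the Monge formulation, shows that $P\circ T\circ P^{-1}$ is feasible for the projected problem with the same cost (using that $x$ and $T(x)$ lie in $\mathcal{R}$ for $\mu_0$-a.e.\ $x$, so distance preservation applies to $x-T(x)$), and then obtains the reverse inequality by pulling a hypothetically better map $\tilde T$ back via $P^{-1}\circ\tilde T\circ P$ and deriving a contradiction. Your preferred ``route (i)'' via Kantorovich couplings is a genuinely different and in some ways cleaner path --- the $(P,P)_\#$ / $(Q,Q)_\#$ bijection between plans concentrated on $\mathcal{R}\times\mathcal{R}$ and plans on $\mathbb{R}^d\times\mathbb{R}^d$ gives both inequalities symmetrically with no contradiction step --- but it proves equality of the \emph{Kantorovich} values, whereas the lemma's $\mathcal{W}_2$ is defined through the Monge problem \eqref{eq:W2squared}. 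Your stated justification for bridging the two (``both measures have finite second moment once a feasible plan exists'') is not sufficient: Monge--Kantorovich equivalence for the quadratic cost requires something like non-atomicity or absolute continuity of $\mu_0$ (on its supporting subspace), not merely finite second moments. This is harmless in the paper's application, where the measures are Gaussians supported on $\mathcal{R}$ and hence absolutely continuous there, but as the lemma is stated for general probability measures you would either need to add that hypothesis or fall back on route (ii), which conjugates Monge maps directly and needs no such appeal.
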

\begin{proof}
Since the measures $\mu_0$ and $\mu_1$ have support in $\mathcal{R}$ it follows that any feasible transport plan in $\Pi(\mu_0, \mu_1)$ has support in $\mathcal{R} \times \mathcal{R}$.
Thus, the isometry between $\mathcal{R}$ and $\mathbb{R}^d$ given by the mapping $P$ infers a bijection $P \times P$ between the feasibility sets $\Pi(\mu_0, \mu_1)$ and $\Pi(P_\# \mu_0, P_\# \mu_1)$ of the optimization problems on the left hand side and right hand side of equation \eqref{eq:prop_subspace}, respectively.
More precisely, 
let $P^{-1}:\mathbb{R}^d \to \mathcal{R}$ be the inverse of $P$ restricted to $\mathcal{R}$.
Then, for any measurable set $A\subset \mathcal {R}$ it holds that
$$ \pi(A, \mathbb{R}^N) = \mu_0(A)  \implies  \left( \left( P\times P \right)_\# \pi \right) \left( P(A) \times \mathbb{R}^d \right) = \left( P_\# \mu_0 \right)  ( P(A) ), $$
and for any measurable set $\hat A \subset \mathbb{R}^d$ and measure $\hat \pi \in \mathbb{R}^d \times \mathbb{R}^d$ it holds that
$$ \hat \pi ( \hat A, \mathbb{R}^d ) = (P_\# \mu_0) ( \hat A) \implies \left( \left( P^{-1}\times P^{-1} \right)_\# \hat\pi \right) \left( P^{-1}( \hat A) \times \mathbb{R}^N \right) = \mu_0   ( P^{-1}(A) ) . $$
Similar relationships hold for the second marginal of the product measures and $\mu_1$.
Hence it follows that
$$ \pi \in  \Pi(\mu_0, \mu_1) \iff \left( P\times P \right)_\# \pi \in \Pi(P_\# \mu_0, P_\# \mu_1).   $$

    Finally, note that for any $\pi \in \Pi(\mu_0, \mu_1)$, where $\mu_0$ and $\mu_1$ have support in $\mathcal{R}$, since $\pi$ has support in $\mathcal{R} \times \mathcal{R}$, it holds that
    \begin{equation*}
        \begin{aligned}
            \int_{\mathbb{R}^N \times \mathbb{R}^N } \| x-y \|_2^2 \ d \pi(x, y) & = \int_{\mathcal{R} \times \mathcal{R}} \|x-y\|_2^2 \ d \pi(x, y) \\
            &= \int_{\mathbb{R}^d \times \mathbb{R}^d } \|x-y\|_2^2 \ d \left(( P \times P )_\# \pi \right) (x,y). 
        \end{aligned}
    \end{equation*}
    Thus, the objective values of the optimization problems in equation \eqref{eq:prop_subspace} coincide for $\pi$ on the left hand side of \eqref{eq:prop_subspace} and $( P \times P )_\# \pi$ on the right hand side of \eqref{eq:prop_subspace}.
    This concludes the equality of the Wasserstein distances in \eqref{eq:prop_subspace}.
\end{proof}

We can now prove Theorem~\ref{thm:bary_unique}.

\begin{proof}[Proof of Theorem~\ref{thm:bary_unique}]

We prove the result by considering the Wasserstein barycenter problem \eqref{eq:barycenter} of the measures $\mu_{G_j} \sim \mathcal{N}(0, L_j^\dag)$ for $j=1,\dots,m$.
It turns out that its solution in fact defines a unique solution to the Bures-Wasserstein mean graph problem \eqref{eq:graph_avg_BW}.

Since the Laplacians $L_j$ satisfy Assumption~\ref{ass:psd}, they all have the same nullspace $\text{span}\{\mathbf{1}_N\}$ and thus, the support of all the probability measures $\mu_{G_j} \sim \mathcal{N}(0, L_j^\dag)$ for $j=1,\dots,m$ is the subspace $\mathcal{R} = (\text{span}\{\mathbf{1}_N\})^\perp$ of $\mathbb{R}^N$.
Consider the orthogonal projection $P^\perp:\mathbb{R}^N \to \mathcal{R}$, and note that it holds $\mathcal{W}^{\mathbb{R}^N}_2(\mu,\mu_{G_j}) \geq \mathcal{W}^{\mathbb{R}^N}_2(P^\perp_{\#\mu},\mu_{G_j})$ for any probability measure $\mu$ on $\mathbb{R}^N$.
Thus, the Wasserstein barycenter $\mu$ of the measures $\mu_{G_j}$, for $j=1,\dots,m$, also has support $\mathcal{R}$.

Consider a matrix $U \in \mathbb{R}^{N \times N-1}$ whose columns form an orthonormal basis of the subspace $\mathcal{R}$ in $\mathbb{R}^N$.
Then, the mapping $P:\mathbb{R}^N \to \mathbb{R}^{N-1}$ defined as $v \mapsto U^T v$ is distance preserving between $\mathcal{R}$ and $\mathbb{R}^{N-1}$.
Hence, applying Lemma~\ref{lem:ot_proj} to every term in the sum of the Wasserstein barycenter problem \eqref{eq:barycenter} yields that
\begin{equation} \label{eq:bary_equi}
\minwrt[\mu ] \sum_{j=1}^m \lambda_j \mathcal{W}^{\mathbb{R}^N}_2(\mu,\mu_{G_j})^2  = \minwrt[ \bar\mu]  \sum_{j=1}^m \lambda_j \mathcal{W}^{\mathbb{R}^{N-1}}_2(\bar\mu,P_\#\mu_{G_j}) ^2,
\end{equation}
where $\mu$ is a measure on $\mathbb{R}^{N}$ and $\bar{\mu}$ is a measure on $ \mathbb{R}^{N-1}$.
Moreover, the minimizers $\mu$ and $\bar \mu$ of the left hand side and ride hand side of \eqref{eq:bary_equi}, respectively, satisfy $P_\# \mu = \bar \mu$, since $\mu$ has support in $\mathcal{R}$.

Note that the projected probability measures on $\mathbb{R}^{N-1}$ are given by
\begin{equation*}
     P_\#\mu_{G_j} \sim \mathcal{N} \left( 0, U^T L_j^\dag U \right) , 
\end{equation*}
where the projected covariance matrices $ U^T L_j^\dag U \in \mathbb{R}^{(N-1) \times (N-1)}$ are strictly positive definite and symmetric.
Thus, by \cite[Theorem~6.1]{agueh2011barycenters} the barycenter problem in the right hand side of \eqref{eq:bary_equi} has a unique solution. 
Moreover, by \citet[Theorem~6.1]{agueh2011barycenters} the unique solution to the barycenter problem in the right hand side of \eqref{eq:bary_equi} is of the form $\bar \mu \sim \mathcal{N}(0, \bar\Sigma)$, and the covariance matrix $\bar \Sigma$ is the unique positive definite solution to \eqref{eq:bary_fb}, where $\Sigma_j = U^T L_j^\dag U$ for $j=1,\dots,m$.
It follows that
\begin{equation*}
    \begin{aligned}
         \sum_{j=1}^m \lambda_j \left( ( U \bar{\Sigma} U^T )^{1/2} L_j^\dag ( U \bar{\Sigma} U^T )^{1/2} \right)^{1/2} & = \sum_{j=1}^m \lambda_j \left(  U \bar{\Sigma}^{1/2} U^T L_j^\dag  U \bar{\Sigma}^{1/2} U^T \right)^{1/2} \\ 
        & = U \left( \sum_{j=1}^m \lambda_j \left( \bar{\Sigma}^{1/2} \Sigma_j \bar{\Sigma}^{1/2}  \right)^{1/2} \right) U^T \\
        & = U \bar{\Sigma} U^T.
    \end{aligned}
\end{equation*}
Thus, the matrix $S= U \bar \Sigma U^T\in \mathbb{R}^{N\times N}$ is a solution to \eqref{eq:bary_eq_degen}.
Moreover, this matrix is symmetric positive semi-definite and has the range $\mathcal{R}$. 
Thus, $\mu \sim \mathcal{N}(0,U \bar \Sigma U^T)$ has support $\mathcal{R}$ and is a minimizer of the left hand side in \eqref{eq:bary_equi}.
This constitutes the existence of a positive semi-definite solution to \eqref{eq:bary_eq_degen} with range $\mathcal{R}$ that characterizes the barycenter of $\mu_{G_j}$ for $j=1,\dots,m$.

Now assume that there is another positive semi-definite matrix $\hat S \in \mathbb{R}^{N\times N}$ with $\hat S \neq S$ that solves \eqref{eq:bary_eq_degen} and satisfies $\text{range}(\hat S)=\mathcal{R}$. Then $ U^T \hat S U$ must be a solution to \eqref{eq:bary_fb}, where $\Sigma_j = U^T L_j^\dag U$ for $j=0,\dots,m$, and since this solution is unique it must hold $ U^T \hat S U = \bar \Sigma$.
However, since $\text{range}(\hat S)=\mathcal{R}$ the mapping $\hat S \to U^T \hat S U$ is a bijection on $\mathcal{R}$, and thus it holds $\hat S= U \bar \Sigma U^T$, which contradicts the assumption $\hat S \neq S$. This asserts the uniqueness of the fixed point. 

Finally, we note that since $U \bar \Sigma U^T$ is symmetric positive semi-definite and has the range $\mathcal{R}$, the same holds for the matrix $L = (U \bar \Sigma U^T)^\dag$. Thus, $L$ is in fact a graph Laplacian matrix that satisfies Assumption~\ref{ass:psd}.

\end{proof}

\subsection{Proof of Proposition~\ref{prop:pd_transform} }

In the following we prove Proposition~\ref{prop:pd_transform}, which is used in Algorithm~\ref{alg:bary} for efficient and stable computation.

\begin{proof}
    For ease of notation denote $\mathbf{1} = \mathbf{1}_{N \times N}$ in this proof.
    By \citet[Theorem~4]{gutman2004generalized}, for $j=1,\dots,m$, the matrix $\Sigma_j = L_j^\dag + \frac{1}{N} \mathbf{1}$ has the same eigenvectors as the matrix $ L_j^\dag$, but the zero-eigenvalue is exchanged for an eigenvalue one.
    Thus, the matrices $\Sigma_j$ are symmetric strictly positive-definite.

    Moreover, note that for any matrix $A\in \mathbb{R}^{N\times N}$ satisfying Assumption~\ref{ass:psd} it holds that
    \begin{equation*}
        \left(  A + \frac{1}{N} \mathbf{1} \right)^\dag =   A^\dag + \frac{1}{N} \mathbf{1},\qquad \left(  A + \frac{1}{N} \mathbf{1} \right)^{1/2} =   A^{1/2} + \frac{1}{N} \mathbf{1} . 
    \end{equation*}
    Thus, if $S$ satisfies the matrix equation \eqref{eq:bary_eq_degen} with $L_0,\dots,L_m$, and Assumption~\ref{ass:psd},
    it follows that
    \begin{equation*}
    \begin{aligned}
        \sum_{j=1}^m \lambda_j  \left( \left(S + \frac{1}{N} \mathbf{1}  \right)^{1/2} \left(  L_j + \frac{1}{N} \mathbf{1} \right)^\dag \left( S + \frac{1}{N} \mathbf{1}  \right)^{1/2} \right)^{1/2}
        & = \sum_{j=1}^m \lambda_j  \left( S^{1/2} L_j^\dag S^{1/2} + \frac{1}{N} \mathbf{1}  \right)^{1/2} \\
        & = \sum_{j=1}^m \lambda_j \left( S^{1/2} L_j^\dag S^{1/2} \right)^{1/2} + \frac{1}{N} \mathbf{1} .
    \end{aligned}
    \end{equation*}
    Thus, $\bar{\Sigma} = L^\dag  + \frac{1}{N} \mathbf{1}$ is the unique positive definite solution of the matrix equation \eqref{eq:bary_eq_degen} with inputs $L_j + \frac{1}{N} \mathbf{1}$, for $j=1,\dots,m$.
\end{proof}

\subsection{Proof of Theorem~\ref{thm:geodesic}}

Finally, we present a proof for the analytical expression of the Bures-Wasserstein mean problem for graphs in case only two graphs are given, which can also be interpreted as an interpolation problem.
We note that Theorem~\ref{thm:geodesic} can be proved using Lemma~\ref{lem:ot_proj} and similar arguments as in the proof of Theorem~\ref{thm:bary_unique}.
Here we present a direct and independent proof utilizing similar techniques as the proof of \citet[Proposition~6.1]{xia2014synthesizing}.
\begin{proof}

We use the fact that the optimal transport geodesic between $\mu_{G_0} \sim \mathcal{N}(0, L_0^\dag)$ and $\mu_{G_1} \sim \mathcal{N}(0, L_1^\dag)$ is defined by $\mu_t = \left( (1-t)I +t T \right)_\# \mu_{G_0}$,
where $T$ is a symmetric matrix that satisfies $T L_0^\dag T = L_1^\dag$ \citep{takatsu2010wasserstein}.
It is easy to check that the matrix
\begin{equation} \label{eq:T_interpolation}
T= L_0^{1/2} \left( L_0^{\dag/2} L_1^\dag L_0^{\dag/2} \right)^{1/2} L_0^{1/2}
\end{equation}
satisfies this identity, since all the matrices in the expression \eqref{eq:T_interpolation} have the same range $\mathcal{R} = (\text{span}\{\mathbf{1}_n \})^\perp$.
Thus, the geodesic is of the form $\mu_t \sim \mathcal{N}(0, \Sigma_t)$, and the covariance matrix can be expressed as
\begin{equation*}
\begin{aligned}
\Sigma_t =& \left(  (1-t)I +t T \right) L_0^\dag \left(  (1-t)I +t T \right) \\
=& L_0^{1/2} \left( (1-t) L_0^\dag +t \left( L_0^{\dag/2} L_1^\dag L_0^{\dag/2} \right)^{1/2} \right) L_0^{1/2} L_0^\dag  L_0^{1/2} \left( (1-t) L_0^\dag +t \left( L_0^{\dag/2} L_1^\dag L_0^{\dag/2} \right)^{1/2} \right) L_0^{1/2} \\
=& L_0^{1/2} \left( (1-t) L_0^\dag +t \left( L_0^{\dag/2} L_1^\dag L_0^{\dag/2} \right)^{1/2} \right)^2 L_0^{1/2}
\end{aligned}
\end{equation*}
Since $L_0^\dag$ is symmetric and positive semi-definite, from the first line it follows that $\Sigma_t$ is also symmetric and positive semi-definite. 
Moreover, the range of $\Sigma_t$ is $\mathcal{R}$, as it is composed of matrices with this range.
The pseudo-inverse $\Sigma_t^\dag$ inherits these properties and thus satisfies Assumption~\ref{ass:psd}.

Now assume that there is another mapping $\hat T$ that satisfies $\hat T L_0^\dag T = L_1^\dag$.
However, note that then
$$ \left( L_0^{\dag/2} T L_0^{\dag/2} \right)^2 = L_0^{\dag/2} L_1^\dag L_0^{\dag/2}  = \left( L_0^{\dag/2} \hat T L_0^{\dag/2} \right)^2,  $$
and thus by taking the square root we see that $\hat T$ and $T$ are equal on $\mathcal{R}$.
Hence, $T$ in \eqref{eq:T_interpolation} is the unique mapping with range in $\mathcal{R}$ that satisfies $T L_0^\dag T = L_1^\dag$.
This constitutes the uniqueness of the interpolant $L_t = \Sigma_t^\dag$ satisfying Assumption~\ref{ass:psd}.
\end{proof}

\end{document}